\newtheorem{observation}{Observation}[section]
\newtheorem{theorem}{Theorem}[section]
\newtheorem{lemma}{Lemma}[section]
\newtheorem{corollary}{Corollary}[section]
\newtheorem{definition}{Definition}[section]
\newtheorem{example}{Example}[section]
\def\M{\mathcal{M}}
\def\afCEC{afCEC }
\def\m{\mathrm{m}}
\def\R{\mathbb{R}}
\def\det{{\mathrm{det}}}
\def\mle{\mathrm{MLE}}
\def\cec{\mathrm{CEC}}
\def\cov{{\mathrm{cov}}}
\def\m{{\mathrm m}}
\newcommand{\argmin}{\operatornamewithlimits{argmin}}
\def\-{\hat}
\def\m{\mathrm{m}}
\def\A{\mathcal{A}}
\def\F{\mathcal{F}}
\def\G{\mathcal{G}}
\def\C{\mathcal{C}}
\def\v{\mathrm{v}}
\def\v{\mathrm{v}}
\def\ssigma{\Sigma}
\def\y{\mathrm{y}}
\def\x{\mathrm{x}}
\newcommand{\ppp}[2]{ X_{#1}^{#2} }
\def\1{\mathds{1}}
\def\det{\mathrm{det}}
\def\cov{\mathrm{cov}}
\def\mean{\mathrm{mean}}
\numberwithin{equation}{section}
\journal{Pattern Recognition}
\begin{document}

\begin{frontmatter}



\title{Active Function Cross-Entropy Clustering}

\author{P. Spurek}
\ead{przemyslaw.spurek@ii.uj.edu.pl}
\author{J. Tabor}
\ead{jacek.tabor@ii.uj.edu.pl}
\author{P. Markowicz}
\ead{pawel.markowicz@ii.uj.edu.pl}

\address{
Faculty of Mathematics and Computer Science, 
Jagiellonian University, 
\L ojasiewicza 6, 
30-348 Krak\'ow, 
Poland}





\begin{abstract}
Gaussian Mixture Models (GMM) have found many applications in density estimation and data clustering. However,
the model does not adapt well to curved and strongly nonlinear data. Recently there appeared an improvement called AcaGMM (Active curve axis Gaussian Mixture Model),
which fits Gaussians along curves using an EM-like (Expectation Maximization) approach. 

Using the ideas standing behind AcaGMM, we 
build an alternative active function model of clustering, which
has some advantages over AcaGMM. In particular it is naturally defined in arbitrary dimensions and enables an easy adaptation to clustering of complicated datasets
along the predefined family of functions. Moreover, it does not need external methods to determine the 
number of clusters as it automatically reduces the number of groups on-line.

%
\end{abstract}

\begin{keyword}
clustering \sep Gaussian Mixture Models \sep Expectation Maximization \sep Cross-Entropy Clustering, Active curve axis Gaussian Mixture Model.
\end{keyword}

\end{frontmatter}
\section{Introduction}

Clustering plays a basic role in many parts
of data engineering, pattern recognition and image analysis
\cite{Clu,Clus,jain1999,jain2010,xu2009clustering}. One of the most important is Gaussian
Mixture Models \cite{mclachlan2007algorithm,mclachlan2004finite,Dubes,hinton1997modeling}.
It is hard to overestimate the role of GMM in computer science \cite{mclachlan2007algorithm,mclachlan2004finite,Dubes,hinton1997modeling}, including object
detection \cite{kumar2003man,campbell1997linear,dasgupta1998detecting,huang1998extensions,samuelsson2004waveform,figueiredo2002unsupervised}, object tracking \cite{mckenna1999tracking,xiong2002improved}, learning and modelling \cite{moghaddam1997probabilistic,samuelsson2004waveform}, feature selection \cite{law2004simultaneous,valente2004variational}, classification \cite{povinelli2004time,mukherjee1998three} or statistical background subtraction \cite{stauffer1999adaptive,hayman2003statistical,basu2002statistical}.

GMM accommodates data of varied structure, e.g. the component distributions can concentrate around surfaces of lower dimension obtained by principal components (PCA) \cite{Jol}. However, it often happens that clusters are concentrated around lower dimensional manifolds which are not linear. 
Since one non-Gaussian component can often be approximated by several Gaussian ones \cite{fraley1998many}, these clusters are in practice represented by introducing more Gaussian components which can be seen as a form of piecewise linear approximation, see Fig.~\ref{fig:acaCEC}.
Due to the intrinsic linearity of the Gaussian model, when there are nonlinear manifolds in the data cloud, it is natural that many components are required and the fitting error is large. Consequently, the constructed model does not reflect optimally the internal structure of the data.
A similar result gives Cross Entropy Clustering approach, compare Fig \ref{fig:letterb1} and \ref{fig:letterb2}.

\begin{figure}[htp]
\begin{center}
\subfigure[Level set for classical Gaussian density.]
{\label{fig:eGauss}
\includegraphics[width=1.5in]{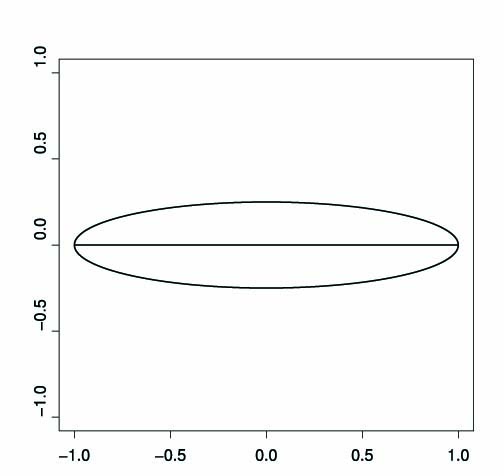} } \qquad
\subfigure[Level set of AcaGMM Gaussian model.]
{\label{fig:eGauss}
\includegraphics[width=1.5in]{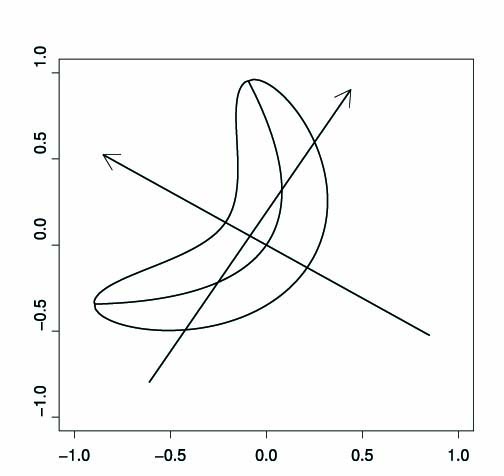} } 
\end{center}
\caption{Comparison of level-sets generated by classical Gaussian density and AcaGMM model.}
\label{fig:acaCEC}
\end{figure}

There are several methods attempting to solve the problem of fitting nonlinear manifolds, e.g. principal curves and principal surfaces \cite{kegl1999principal,hastie1989principal,leblanc1994adaptive}. Principal curves/surfaces algorithms are typically capable of expressing a single complex manifold.
In \cite{zhang2005active} the authors present an adaptation of the Gaussian Mixture Model called Active curve axis Gaussian Mixture Models (AcaGMM),
which uses a nonlinear curved Gaussian probability model in clustering. In its basic version it works with data on the plane and
adapts to the quadratic curves. In other words AcaGMM uses a wider class then typical Gaussians -- namely Gaussians which are curved over parabolas.

Since our paper aims at solving the same task as AcaGMM, let us 
first explain the method and present
the typical steps behind it. 
First, using an additional tool, the authors find 
the ``right'' number of clusters (one of the possible methods is given in \cite{zhang2004competitive}, however, one can also use \cite{tabor2014cross}). Then for each cluster the PCA algorithm is applied to determine the reasonable basis, and a Gaussian curved along the optimal parabola is used. The coordinate system is nonlinear, see Fig. \ref{fig:letterb3} (the $y$ coordinate is chosen
as a distance from the parabola, and $x$ is
the length on the parabola from the projected point to 
the parabola's vertex).
AcaGMM has found applications in particularly in human hand motion recognition \cite{ju2011unified}. It can also be fuzzified \cite{ju2012fuzzy}.

\begin{figure}[htp]
\begin{center}
\subfigure[]
{\label{fig:letterb1}
\includegraphics[width=1.1in]{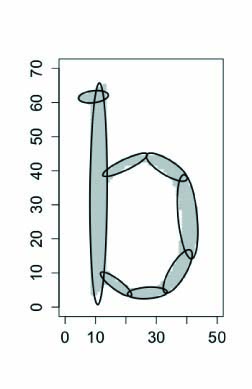}}  \quad
\subfigure[]
{\label{fig:letterb2}
\includegraphics[width=1.1in]{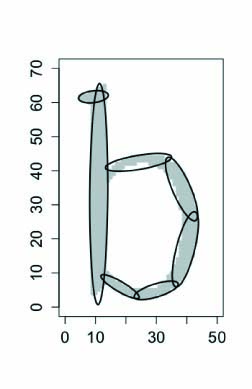}}  \quad
\subfigure[]
{\label{fig:letterb3}
\includegraphics[width=1.1in]{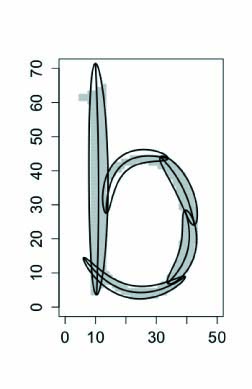}}  \quad
\subfigure[]
{\label{fig:letterb4}
\includegraphics[width=1.1in]{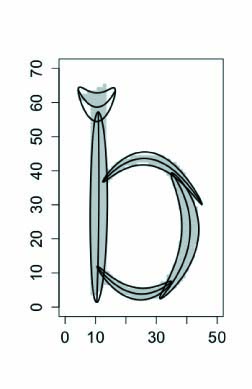}}  \quad
\end{center}
\caption{ Fitting a b-type set by using (a) GMM, (b) CEC, (c) AcaGMM, (d) afCEC.}
\label{fig:letterb}
\end{figure}

AcaGMM works well in practice, however, it has some limitations. The model is naturally restricted to quadratic functions as the nonlinear coordinate system requires the 
projection onto the graph and length of the curve. The use of the method in higher dimensional case, although possible, is practically rather limited. 
%
%
Moreover, AcaGMM is not a theoretically based density model (see Appendix for the detailed explanation), and therefore it is not in fact formally EM based, but only uses its optimization algorithm. Consequently, contrary to the classical EM \cite{EM2, EM3}, the MLE cost function does not necessarily decrease with iterations. Let us recall that in general EM aims at finding $p_1,\ldots,p_k \geq 0$, $\sum_{i=1}^k p_i=1$ and $f_1,\ldots, f_k$ Gaussian densities (where $k$ is given beforehand and denotes the number of densities which convex combination builds the desired density model) such that the convex combination
$$
f:=p_1 f_1 +\ldots p_k f_k
$$
optimally approximates the scatter of our data $X=\{x_1,\ldots,x_n\}$ with respect to MLE cost function
\begin{equation} \label{eq1}
\mle(f,X):=-\sum_{l=1}^n \ln(p_1 f_1(x_l) +\ldots +p_n f_n(x_l)).
\end{equation}
The EM procedure consists of the Expectation and Maximization steps. While the Expectation step is relatively simple, the Maximization
usually needs complicated numerical optimization even 
for relatively simple Gaussian models \cite{Ma,celeux1995gaussian,banfield1993model}.

\begin{figure}[htp]
\begin{center}
\includegraphics[width=2.3in]{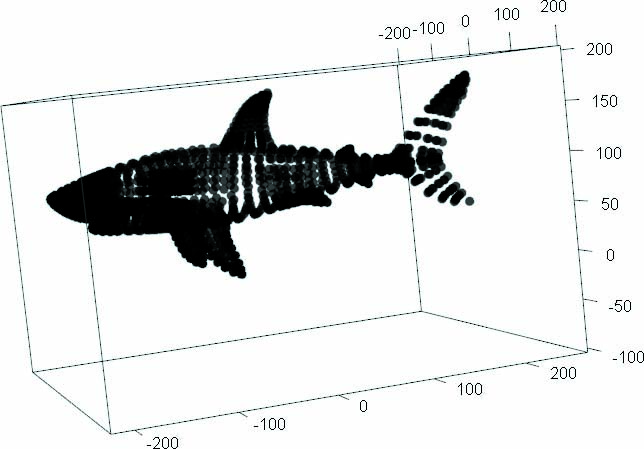} 
\includegraphics[width=2.3in]{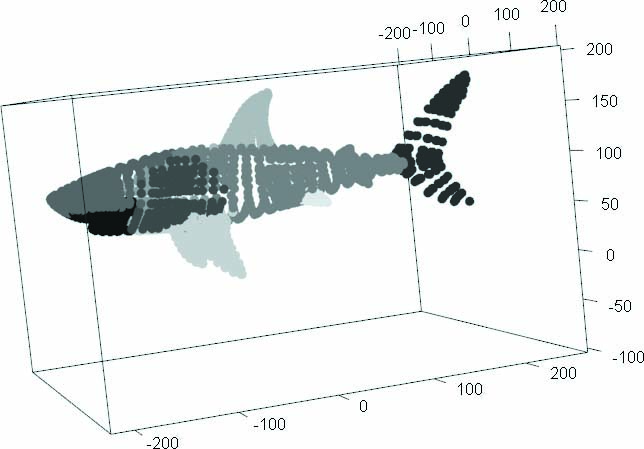} 
\includegraphics[width=2.3in]{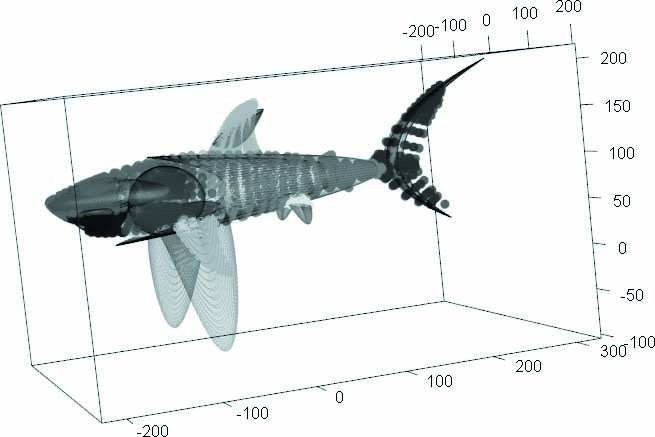} 
\includegraphics[width=2.3in]{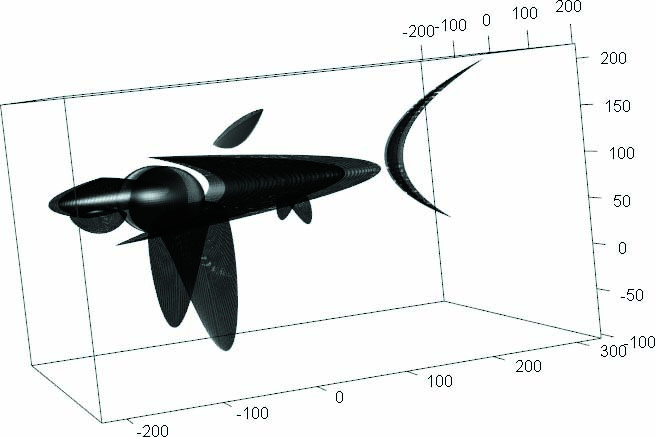}
\end{center}
\caption{ Result of \afCEC algorithm in the case of a 3D shark-type set.}
\label{fig:3d_1}
\end{figure}

In this paper we propose the \afCEC method which is based on the CEC model, instead of the Expectation Maximization (EM) and Gaussian density model in a curvilinear coordinate system. A goal of CEC is to minimize the
cost function, which is a minor modification of that given in \eqref{eq1}
by substituting sum with maximum:
\begin{equation} \label{eq2}
\cec(f,X):=-\sum_{l=1}^n \ln(\max(p_1 f_1(x_l),\ldots,p_n f_n(x_l))).
\end{equation}
Instead of focusing on the density estimation as its
main task, CEC aims itself directly to the clustering problem. It occurs that at the small cost of minimally worse density approximation \cite{tabor2014cross} we gain speed in implementation\footnote{We can often use the Hartigan approach to clustering which is faster and typically finds better minima.} and the ease of using more complicated density models. Roughly speaking, the advantage is obtained because models do not mix with each other, since
we take the maximum instead of sum.

Consequently, we are able to construct an algorithm which is easy to adapt to the higher dimensional case.
The results of \afCEC and AcaGMM are similar on the plane, compare Fig. \ref{fig:letterb3} and Fig. \ref{fig:letterb4}. 
The effect of our algorithm in $\R^3$ on a shark-type set \cite{bronstein2008numerical,bronstein2006efficient} is shown in Fig. \ref{fig:3d_1}.

The \afCEC method is able to reduce unnecessary clusters.
In Fig. \ref{fig:dog} we present a convergence process of \afCEC with initial number of clusters $k=10$, which is reduced to $k=5$.

\begin{figure}[htp]
\begin{center}
\includegraphics[width=0.85in]{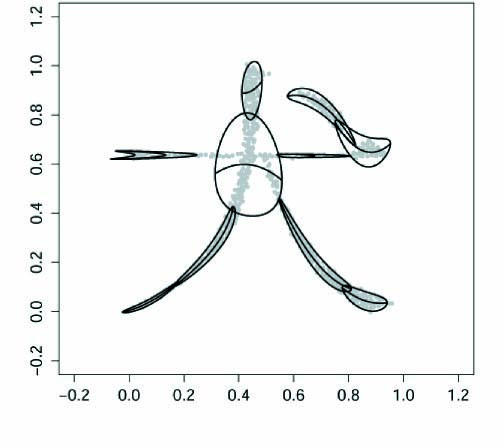} 
\includegraphics[width=0.85in]{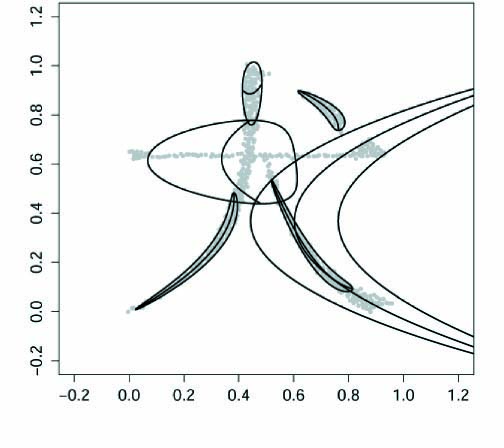} 
\includegraphics[width=0.85in]{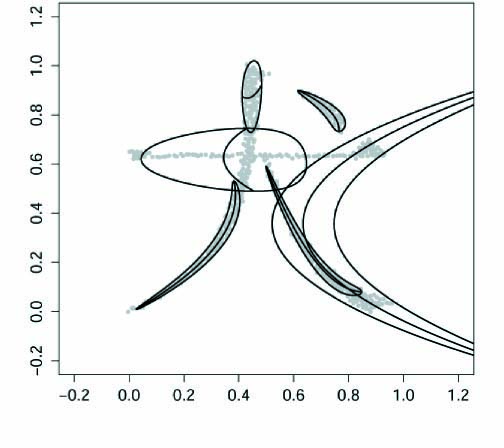} 
\includegraphics[width=0.85in]{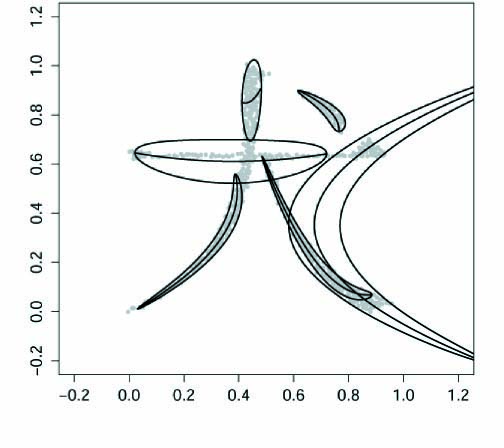}
\includegraphics[width=0.85in]{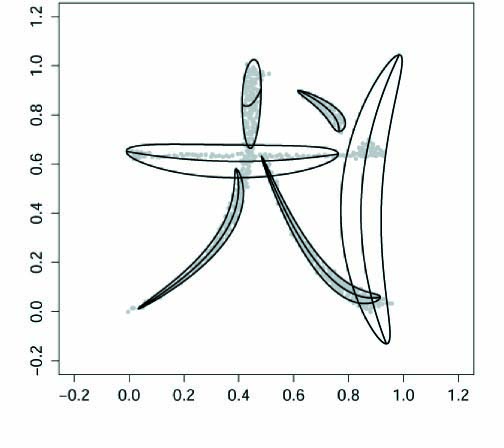}
\includegraphics[width=0.85in]{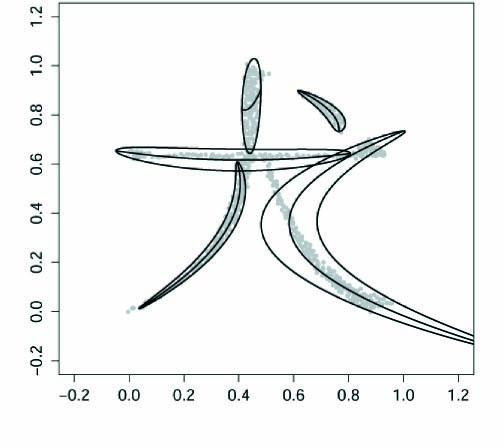}
\includegraphics[width=0.85in]{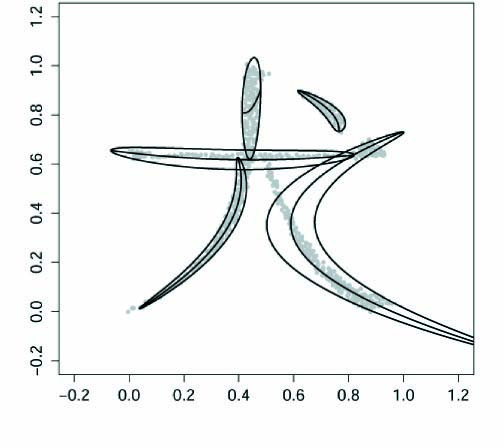} 
\includegraphics[width=0.85in]{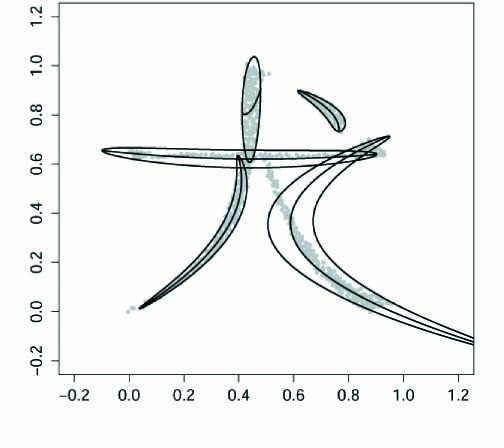} 
\includegraphics[width=0.85in]{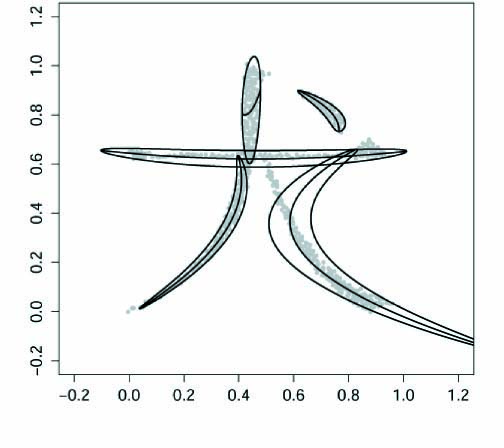} 
\includegraphics[width=0.85in]{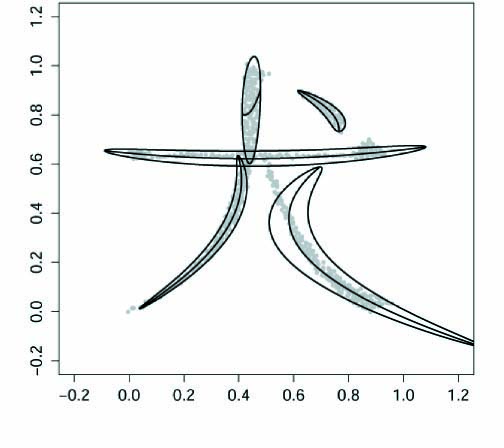} 
\includegraphics[width=0.85in]{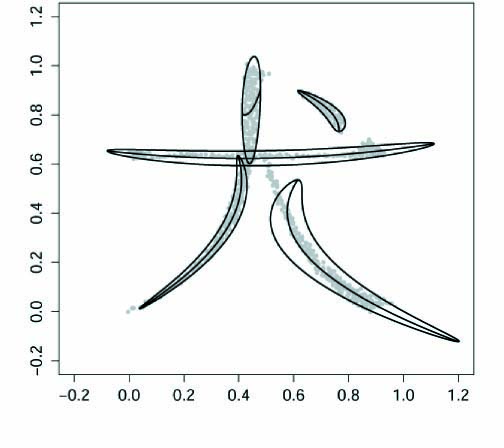}
\includegraphics[width=0.85in]{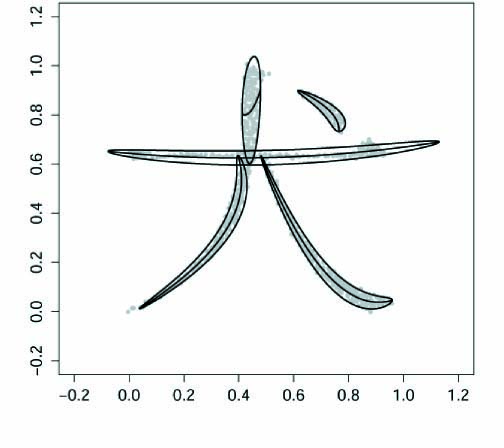}
\end{center}
\caption{ A convergence process of \afCEC on a Chinese character with initial $k=10$, which is reduced to $k=5$.}
\label{fig:dog}
\end{figure}

This paper is arranged as follows. In the next section the theoretical background of the density model will be presented. Since AcaGMM works in $\R^2$ only for parabolas we start with a similar situation. Then we describe a general model for data in $\R^d$. In the third chapter we present the theoretical background of the \afCEC method. In particular, we prove that the cost function decreases in every iteration, see Theorem \ref{the:ener}. The last chapter presents numerical experiments. In appendix we include details of the description of the AcaGMM model. 

\section{$f$-adapted Gaussian density}

In this section, the $f$-adapted Gaussian distribution, where $f \in \C(\R^{d-1},\R)$ is a continuous function, will be presented. The goal of this approach is to transform a normal distribution (which assumes the intrinsic linearity of the model) to the case of curves (or more generally to manifolds), which are given by the graph of the function~$f$. The above model will be used in the \afCEC method.

\subsection{Toy example in $\R^2$}

Since AcaGMM works in the two-dimensional case (in higher dimensional ones the authors use PCA to reduce problems to 2D) with parabolas ($f(x) = ax^2 + b$ for $a,b \in \R$), we start from comparison AcaGMM and our model in such a case.
Let $f(x) = ax^2 + b$ for $a,b \in \R$ be given. The two dimensional Gaussian density for $\m^T=[m_1,m_2]$ and covariance matrix $
\Sigma=
\begin{bmatrix}
\sigma_{1} & 0 \\
0 & \sigma_{2}
\end{bmatrix}
$ is given by the following formula
\begin{equation}\label{gauss_2d}
N(\m,\Sigma)( \x ) =N(m_1,\sigma_1^2)(x_1) \cdot N(m_2,\sigma_2^2)(x_2),
\end{equation}
where in the one dimensional case we have
$$
N(m,\sigma^2)(x)=
\frac{1}{\sqrt{2 \pi} \sigma}
\exp\left(-\frac{|x-m |^2}{2 \sigma^2} \right) \mbox{ for } m,\sigma \in \R.
$$

Let $\x= [x_1,x_2]^T \in \R^2$ be given. The AcaGMM approach uses the orthogonal projection of the point $\x$ onto the parabola $f$ which is denoted by $p_f(\x)$ and the arc length between $p_f(\x)$ and $\m$ which is denoted by $l_f(p_f(\x),\m)$. Consequently the AcaGMM function is given by
\begin{equation}\label{acagmm_fun}
N(\m,\Sigma,f)( \x) = \tfrac{1}{\sqrt{2 \pi } \sigma_1} \exp{\left(-\tfrac{l(\x,\m_1)^2}{2\sigma_1^2} \right)} \cdot 
\tfrac{1}{\sqrt{2 \pi } \sigma_2} \exp{\left(-\tfrac{ \| p_f(\x) - \x \|^2}{2\sigma_2^2} \right)}.
\end{equation}

This approach is very intuitive but it causes two basic problems. It is very hard (or even impossible) to give explicit formulas for orthogonal projection and arc length for more complicated curves in higher dimensional spaces. Calculations are complicated (from the numerical point of view), consequently the field of possible generalizations of AcaGMM is limited. Moreover, the function which was used in AcaGMM, see formula (\ref{acagmm_fun}), is not a density. The Jacobian of the respective transformation was not included (see Appendix).

In our paper we use a simpler approach, which is based on the Euclidean norm and the following formula for the density function $f$: 
\begin{equation}\label{gauss_curve_2d}
N(\m,\Sigma,f)( [x_1,x_2]) =N(m_1,\sigma_1^2)(x_1)\cdot N(m_2,\sigma_2^2)(x_2-f(x_1)).
\end{equation}
 Since we do not use orthogonal projection and arc length, it is easy to calculate the parameters of our generalized Gaussian distribution, see Fig.~\ref{fig:el}.
 
\begin{figure}[htp]
\begin{center}
\subfigure[]
{\label{fig:eGauss}
\includegraphics[width=1.5in]{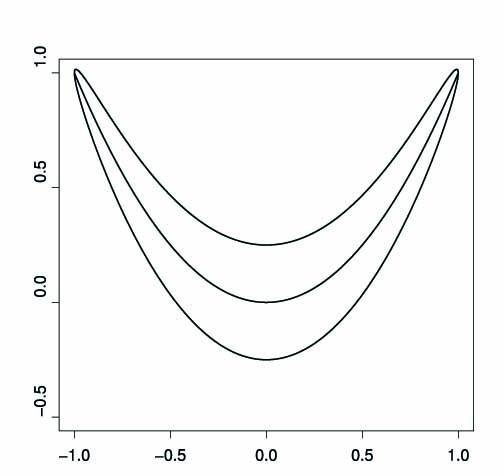} } \quad
\subfigure[]
{\label{fig:eGauss}
\includegraphics[width=1.5in]{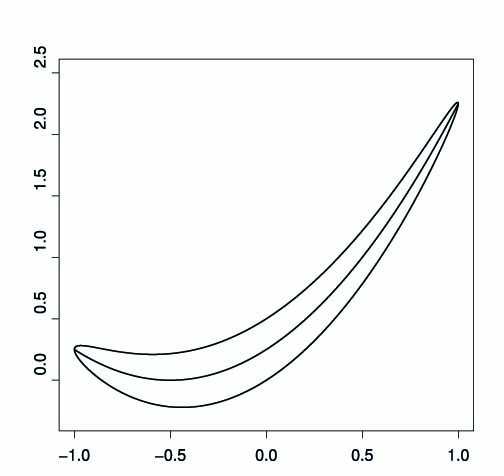} } 
\end{center}
\caption{Density level-sets generated by the $f$-adapted Gaussian model.}
\label{fig:el}
\end{figure}

The practical difference in $\R^2$ between AcaGMM and our approach is quite small\footnote{In our case we use the parabola $ax^2+bx+c$ instead $ax^2+c$ since our method does not apply the change of coordinates given by PCA.}, see Fig. \ref{fig:one_claster}. Nevertheless, our model is more flexible, as
we can use an arbitrary class of functions for which least squares methods work.

\begin{figure}[htp]
\begin{center}
\subfigure[The AcaGMM method.]
{\label{fig:fu1}
\includegraphics[width=1.3in]{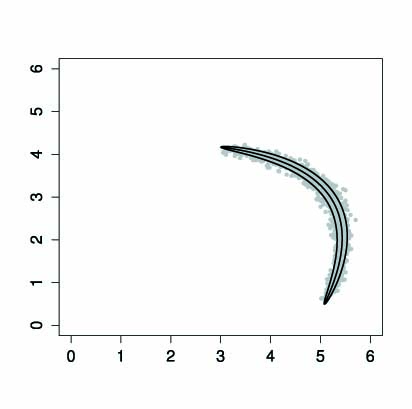}
\includegraphics[width=1.3in]{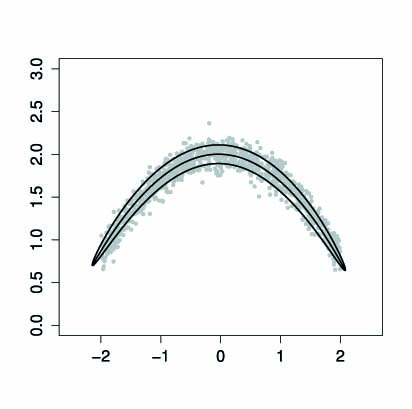}
\includegraphics[width=1.3in]{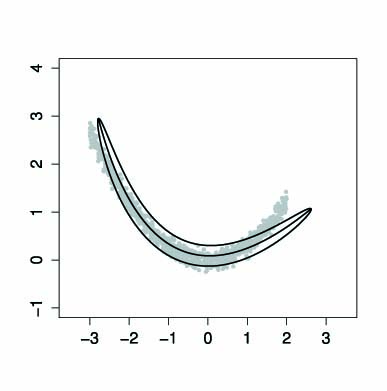}} \\ 
\subfigure[The afCEC method.]
{\label{fig:fu2}
\includegraphics[width=1.3in]{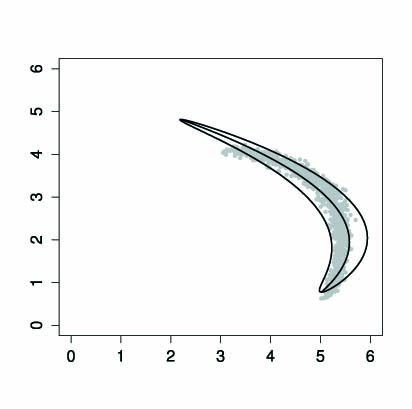}
\includegraphics[width=1.3in]{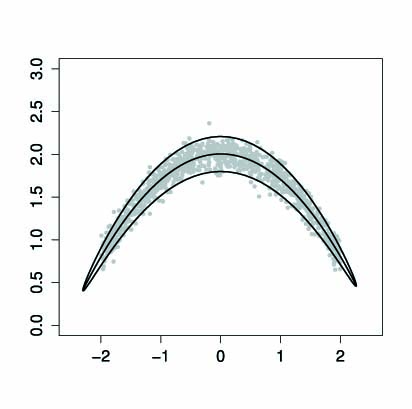}
\includegraphics[width=1.3in]{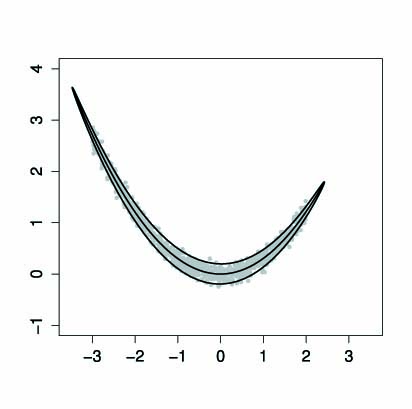}} 
\end{center}
\caption{Ellipses generated by AcaGMM and afCEC.}
\label{fig:one_claster}
\end{figure}

\subsection{$f$-adapted Gaussian density}

In this subsection, the general notion of $f$-adapted Gaussian will be presented. Let us recall that the standard Gaussian density in $\R^d$ is defined by 
$$
N(\m,\Sigma)(\x):=\frac{1}{(2\pi)^{d/2} \det(\Sigma)^{1/2}} \exp \left(-\frac{1}{2}\|\x-\m\|^2_{\Sigma}\right),
$$
where $\m$ denotes the mean, $\Sigma$ is the covariance matrix and
$
\|v\|^2_{\Sigma}:=v^T \Sigma^{-1} v
$
is the square of the Mahalanobis norm.

\begin{figure}[htp]
\begin{center}
\subfigure[$f(x)=0$]
{\label{fig:eGauss}
\includegraphics[width=1.2in]{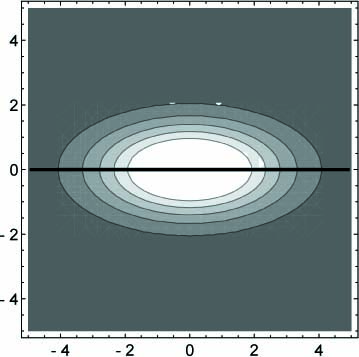}} 
\subfigure[$f(x)=x$]
{\label{fig:eGaca}
\includegraphics[width=1.2in]{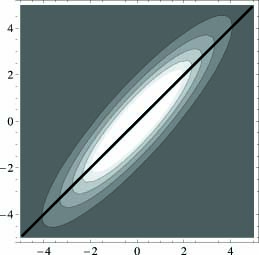}}
\subfigure[$f(x)=\frac{1}{8}x^2$]
{\label{fig:eGcec}
\includegraphics[width=1.2in]{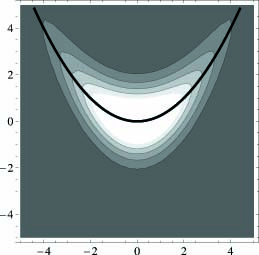}} 
\subfigure[$f(x)=\frac{1}{16}x^3$]{\label{fig:eGcec}
\includegraphics[width=1.2in]{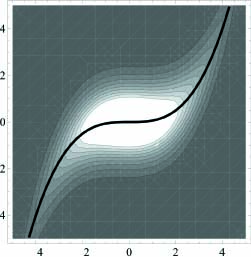}} 
\end{center}
\caption{Level sets for $f$-adapted Gaussian distribution.}
\label{fig:e}
\end{figure}

In our work we use a multidimensional Gaussian density in a curvilinear coordinate system which is spread along the function $f \colon \R^{d-1} \to \R $ ($f$-adapted Gaussian density). We treat one of the variables (for simplicity, the last one) separately. In such a case we consider only those
$\Sigma \in \M_{d}(\R)$ (where $\M_{d}(\R)$ denotes the set of $d$-dimensional square matrices) which have the diagonal block matrix form
$$
\Sigma=
\begin{bmatrix}
\Sigma_{\hat d} & 0 \\
0 & \ssigma_{d}
\end{bmatrix},
$$
where $\Sigma_{\hat d} \in \M_{d-1}(\R)$ and $\ssigma_{d} \in \R$.
For $\x = (x_1,\ldots,x_n) \in \R^d$ and $k=1,\ldots,n$ we will use the notation 
$$
\x_{\hat k} := (x_1,\ldots,x_{k-1},x_{k+1},\ldots,x_{d}) \in \R^{d-1}.
$$ 
For $X \subset \R^{d}$, we denote 
$
X_{\hat k} := \{ \x_{\hat k} \colon \x \in X \},
$ 
the set containing vectors from $X$ with removed $k$ coordinate, and
$
X_{k} := \{ x_{k} \colon \x \in X \}.
$
For a function $f \colon \R^{d-1} \to \R$, we denote
$$
\ppp{k}{f} := \{ f(\x_{\hat k}) - x_k \colon \x \in X \}.
$$

\begin{definition}
Let $f \in \C(\R^{d-1},\R) $, $\Sigma_{\hat d} \in \M_{d-1}(\R)$, $\ssigma_{d} \in \R$, $\m \in \R^{d}$ be given.
The $f$-adapted Gaussian density for 
$\Sigma_{\hat d}$, $\ssigma_{d}$
and $\m$ is defined as follows
\begin{equation} \label{gauss_2ddo}
\begin{array}{l}
N(\m,\Sigma_{\hat d},\ssigma_d,f)(\x)= N(\m_{\hat d},\Sigma_{\hat d})(\x_{\hat d}) \cdot N(m_{d},\ssigma_{d})(x_{d} - f(\x_{\hat d})) 
\end{array}
\end{equation}
\end{definition}

Level sets for $f$-adapted Gaussian distributions with different types of functions are presented in Fig.~\ref{fig:e}.

\begin{observation}
The $f$-adapted Gaussian function $N(\m,\Sigma_{\hat d},\ssigma_d,f)(\x)$, 
where 
$f \in \C(\R^{d-1},\R$), $\Sigma_{\hat d} \in \M_{d-1}(\R)$, $\ssigma_{d} \in \R$, $\m \in \R^{d}$
is a density.
\end{observation}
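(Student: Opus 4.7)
The plan is to verify the two defining properties of a probability density: nonnegativity almost everywhere, and total mass equal to one.

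Nonnegativity is immediate: by formula \eqref{gauss_2ddo}, $N(\m,\Sigma_{\hat d},\ssigma_d,f)(\x)$ is the product of the value of a standard $(d-1)$-dimensional Gaussian density at $\x_{\hat d}$ and the value of a one-dimensional Gaussian density at $x_d - f(\x_{\hat d})$. Both factors are nonnegative, so their product is nonnegative.

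For the normalization, I would apply Fubini's theorem (which is legitimate since the integrand is nonnegative and measurable, the latter following from continuity of $f$) to write
\begin{equation*}
\int_{\R^d} N(\m,\Sigma_{\hat d},\ssigma_d,f)(\x)\, d\x
= \int_{\R^{d-1}} N(\m_{\hat d},\Sigma_{\hat d})(\x_{\hat d}) \left( \int_{\R} N(m_d,\ssigma_d)(x_d - f(\x_{\hat d}))\, dx_d \right) d\x_{\hat d}.
\end{equation*}
For each fixed $\x_{\hat d} \in \R^{d-1}$, the translation $u = x_d - f(\x_{\hat d})$ leaves Lebesgue measure on $\R$ invariant, so the inner integral equals $\int_{\R} N(m_d,\ssigma_d)(u)\, du = 1$. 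The outer integral is then the integral of the $(d-1)$-dimensional Gaussian density $N(\m_{\hat d},\Sigma_{\hat d})$, which also equals $1$. Multiplying the two gives $1$, as required.

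There is no real obstacle here: the only subtle point is recognising that although $f$ bends the level sets of the density in a highly nonlinear way, the transformation $x_d \mapsto x_d - f(\x_{\hat d})$ acts as a pure translation in the $x_d$ fibre for each fixed $\x_{\hat d}$, so the Jacobian of the change of variables is identically $1$. This is precisely the simplification that distinguishes our construction from AcaGMM, whose curvilinear reparametrisation via orthogonal projection and arc length produces a nontrivial Jacobian that was neglected in \eqref{acagmm_fun}.
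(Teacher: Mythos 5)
Your proof is correct, and it follows a slightly different route from the one in the paper. The paper proves the normalization by a global change of variables: it considers the shear map $(y_1,\ldots,y_d)=(x_1,\ldots,x_{d-1},x_d-f(\x_{\hat d}))$ on all of $\R^d$ and explicitly computes the Jacobian matrix, which is lower triangular with ones on the diagonal, hence has determinant $1$, so the $f$-adapted function inherits total mass one from the block-diagonal Gaussian $N(\m,\Sigma)$. You instead factor the integrand and apply Fubini, reducing everything to translation invariance of Lebesgue measure in the single fibre variable $x_d$ for each fixed $\x_{\hat d}$, plus the known normalizations of the $(d-1)$-dimensional and one-dimensional Gaussians; you also record nonnegativity explicitly, which the paper leaves implicit. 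The two arguments rest on the same underlying fact (the shear is measure preserving), but yours has a small technical advantage: the paper's Jacobian computation uses the partial derivatives $\partial f/\partial x_i$, which implicitly requires $f$ to be differentiable, whereas the statement only assumes $f\in\C(\R^{d-1},\R)$; your Fubini-plus-translation argument needs only measurability of $f$, so it covers the stated hypothesis without any additional smoothness assumption. The paper's version, on the other hand, makes the contrast with the neglected Jacobian in the AcaGMM construction completely explicit, which is the rhetorical point the authors want, and which you also note at the end.
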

\begin{proof}
Let $N(\m,\Sigma)$ be a $d$-dimensional Gaussian density such, that 
$
\Sigma = 
\begin{bmatrix}
\Sigma_{\hat d} & 0 \\
0 & \Sigma_{d}
\end{bmatrix},
$
where $\Sigma_{\hat d} \in \M_{d-1}(\R)$, $\ssigma_{d} \in \R$, $\m \in \R^{d}$.

Let us consider a substitution 
$$
(y_1, \ldots , y_d) = (x_1, \ldots , x_{d-1} , x_d - f(\x_{\hat d})).
$$
In such a case, the Jacobian is equal to
$$
J(x_1, \ldots , x_{d})=
\det
\begin{bmatrix}
1 & 0 & \cdots & 0 & 0\\
0 & 1 & \cdots & 0 & 0\\
\vdots & \vdots & \ddots & \vdots & \vdots\\
0 & 0 & \cdots & 1 & 0\\
\frac{\partial f(\x_{\hat d})}{\partial x_1} & \frac{\partial f(\x_{\hat d})}{\partial x_2} & \cdots & \frac{\partial f(\x_{\hat d})}{\partial x_{d-1}} & 1
\end{bmatrix}
=
1.
$$
Consequently, $N(\m,\Sigma_{\hat d},\ssigma_d,f)(\x)$
is a density.
\end{proof}

We will use the family of all $d$-dimensional Gaussian densities $\G(\R^d)$.
Moreover, for $f \colon \R^{d-1} \to \R $, we will consider family of $f$-adapted Gaussian functions 
$$
\A_{f}(\R^{d-1},\R) := \left\{ N(\m,\Sigma_{\hat d},\ssigma_d,f) \colon \Sigma_{\hat d} \in \M_{d-1}(\R), \m \in \R^d \mbox{ and } \ssigma_{d} \in \R \right\}.
$$
For the family $\F\subset \C(\R^{d-1},\R)$, we define
$$
\A_{\F}(\R^{d-1},\R) = \bigcup_{f \in \F} \{ \A_{f}(\R^{d-1},\R) \}.
$$
We show that if $\F$ contains all linear transformations, then $\G(\R^d) \subset \A_{\F}(\R^{d-1},\R)$. Let us start with simple Lemma.

\begin{lemma}\label{lem:fam}
Let $\m \in \R^d$, $\Sigma_{\hat d} \in \M_{d-1}(\R)$, $\Sigma_{d}\in \R$ and $\v \in \R^{d-1}$ be given. Then for $
A=
\begin{bmatrix}
I_{d-1} & 0 \\
\v^{T} & -1 \\
\end{bmatrix}
$
we have
$$
N\left(A\m,A\begin{bmatrix}
\Sigma_{\hat d} & 0 \\
0 & \Sigma_{d}
\end{bmatrix} A^T\right)(\x) = N(\m,\Sigma_{\hat d},\Sigma_d,f)(\x),
$$
where
$f\colon \R^{d-1} \to \R$ such, that $f(\x) = \v^{T} \cdot \x$.
\end{lemma}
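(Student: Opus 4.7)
The plan is to read the identity as a change-of-variables statement for Gaussians. The left-hand side is the pushforward of $N(\m,\Sigma_0)$ (writing $\Sigma_0:=\diag(\Sigma_{\hat d},\Sigma_d)$ for the block-diagonal parameter matrix) under the linear map $A$, while the right-hand side, once $f(\x_{\hat d})=\v^T\x_{\hat d}$ is spelled out, is already a product of two lower-dimensional Gaussians; the block-diagonal structure of $\Sigma_0$ is the mechanism that forces the two sides to agree.

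First I would record two structural facts about $A$ that make the rest mechanical. Since $A$ is block lower-triangular with diagonal entries $1,\ldots,1,-1$, one has $\det A=-1$ (so $|\det A|=1$ and $A$ is invertible), and a one-line block multiplication shows $A^2=I$, i.e.\ $A$ is an involution with $A^{-1}=A$. Using the standard transformation law for Gaussians, $X\sim N(\m,\Sigma_0)\Longrightarrow AX\sim N(A\m,A\Sigma_0 A^T)$, and the density change-of-variables formula gives
\[
N\!\bigl(A\m,A\Sigma_0 A^T\bigr)(\x)
\;=\;\frac{1}{|\det A|}\,N(\m,\Sigma_0)(A^{-1}\x)
\;=\;N(\m,\Sigma_0)(A\x).
\]
A direct block computation yields $A\x=(\x_{\hat d},\,\v^T\x_{\hat d}-x_d)^T$, and the block-diagonality of $\Sigma_0$ then factors $N(\m,\Sigma_0)(A\x)$ as
\[
N(\m_{\hat d},\Sigma_{\hat d})(\x_{\hat d})\cdot N(m_d,\Sigma_d)\bigl(\v^T\x_{\hat d}-x_d\bigr),
\]
which is precisely the $f$-adapted density $N(\m,\Sigma_{\hat d},\Sigma_d,f)(\x)$ for $f(\x_{\hat d})=\v^T\x_{\hat d}$, after matching the one-dimensional factor (the $-1$ in the last row of $A$ is exactly what puts $f(\x_{\hat d})-x_d$ into the last Gaussian).

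I expect no serious obstacle. The only care required is the bookkeeping of the block arithmetic for $A$, $A\m$ and $A\Sigma_0 A^T$, together with matching the sign in the argument of the one-dimensional factor. If the probabilistic argument is deemed too quick, a purely algebraic alternative is to compute $\tilde\Sigma:=A\Sigma_0 A^T$ explicitly, verify $\det\tilde\Sigma=(\det A)^2\det\Sigma_0=\det\Sigma_0$ so the normalizing constants match, and then invert $\tilde\Sigma$ via the Schur complement to show that the quadratic form $(\x-A\m)^T\tilde\Sigma^{-1}(\x-A\m)$ collapses to $\|\x_{\hat d}-\m_{\hat d}\|_{\Sigma_{\hat d}}^2+(x_d-f(\x_{\hat d})-m_d)^2/\Sigma_d$, which is exactly the exponent appearing on the right-hand side.
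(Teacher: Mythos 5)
Your argument is correct in substance but follows a genuinely different route from the paper's. The paper proceeds by direct computation: it multiplies out $A\m$ and $A\,\diag(\Sigma_{\hat d},\Sigma_d)A^T$, writes the inverse of the latter as a rank-one (Schur-complement type) update of $\diag(\Sigma_{\hat d}^{-1},0)$, and checks that the quadratic form collapses to the one defining the $f$-adapted density --- i.e.\ essentially the ``purely algebraic alternative'' you sketch at the end. Your primary argument instead uses the involution property $A^2=I$ and $|\det A|=1$ to get the pushforward identity $N(A\m,A\Sigma A^T)(\x)=N(\m,\Sigma)(A\x)$, and then factors the block-diagonal Gaussian. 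This is shorter, avoids inverting $A\Sigma A^T$ altogether, and makes the conceptual content of the lemma transparent (the $f$-adapted Gaussian is a shear image of an ordinary Gaussian, which is also the quickest way to see why it integrates to one and why Theorem~\ref{the:gr} should hold); what the paper's computation buys instead is the explicit covariance $A\Sigma A^T$ and its inverse, which is the form it matches against a general Gaussian in the proof of Theorem~\ref{the:gr}.

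One step should be made explicit, because as written it hides a sign. Your factorization produces the last factor $N(m_d,\Sigma_d)(\v^T\x_{\hat d}-x_d)$, which you identify with $N(m_d,\Sigma_d)(x_d-\v^T\x_{\hat d})$; but the one-dimensional Gaussian is symmetric about $m_d$, not about $0$, so this identification is valid only when $m_d=0$. Carrying the means through (in either your probabilistic route or the Schur-complement route) one finds that the left-hand side equals $N\bigl((\m_{\hat d},-m_d),\Sigma_{\hat d},\Sigma_d,f\bigr)(\x)$, so the two sides of the lemma as stated differ by the substitution $m_d\mapsto-m_d$ and coincide for all $\x$ only if $m_d=0$. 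This defect sits in the statement itself: with definition \eqref{gauss_2ddo} the clean identity is obtained by taking $+1$ instead of $-1$ in the bottom-right entry of $A$ (which leaves $A\Sigma A^T$ unchanged and only alters the last entry of $A\m$). The paper's own proof conceals the same issue by dropping the means from its quadratic-form computation, so your proposal is no worse off; still, a careful write-up should either fix the sign in $A$ or record the reparametrization of the mean, which is harmless for the family-level conclusions (Theorem~\ref{the:gr}) drawn from the lemma.
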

\begin{proof}
Let us denote $\Sigma = 
\begin{bmatrix}
\Sigma_{\hat d} & 0 \\
0 & \Sigma_{d}
\end{bmatrix}
$ and $\m^T=[\m_{\hat d},m_{d}]$, then we have
$$
N(A\m,A\Sigma A^T)(\x) =N\left(\begin{bmatrix}
I_{d-1} & 0 \\
\v^{T} & -1 \\
\end{bmatrix}
\begin{bmatrix}
\m_{\hat d} \\
m_d \\
\end{bmatrix}
,\begin{bmatrix}
I_{d-1} & 0 \\
\v^{T} & -1 \\
\end{bmatrix} \begin{bmatrix}
\Sigma_{\hat d} & 0 \\
0 & \Sigma_{d}
\end{bmatrix} \begin{bmatrix}
I_{d-1} & 0 \\
\v^{T} & -1 \\
\end{bmatrix}^T\right)(\x)=
$$
$$
=N\left(
\begin{bmatrix}
\m_{\hat d} \\
\v^T \m_{\hat d} - m_d \\
\end{bmatrix}
, \begin{bmatrix}
\Sigma_{\hat d} & 0 \\
\v^T \Sigma_{\hat d} & -\Sigma_{d}
\end{bmatrix} \begin{bmatrix}
I_{d-1} & \v \\
0 & -1 \\
\end{bmatrix}\right)(\x)=
$$
$$
= N\left(
\begin{bmatrix}
\m_{\hat d} \\
\v^T \m_{\hat d} - m_d \\
\end{bmatrix}
, \begin{bmatrix}
\Sigma_{\hat d} & \Sigma_{\hat d}\v \\
\v^T \Sigma_{\hat d} & \v^T \Sigma_{\hat d}\v + \Sigma_{d}
\end{bmatrix} \right)(\x).
$$
It is easy to show that
$$
(A\Sigma A^T)^{-1}=
\begin{bmatrix}
\Sigma_{\hat d} & \Sigma_{\hat d}\v \\
\v^T \Sigma_{\hat d} & \v^T \Sigma_{\hat d}\v + \Sigma_{d}
\end{bmatrix}^{-1}=$$
$$
=\begin{bmatrix}
\Sigma_{\hat d}^{-1} & 0 \\
0 & 0
\end{bmatrix}+
\Sigma_{d}^{-1}\begin{bmatrix}
\v\v^T & -\v \\
-\v^T & 1
\end{bmatrix}=
\begin{bmatrix}
\Sigma_{\hat d}^{-1} & 0 \\
0 & 0
\end{bmatrix}+
\Sigma_{d}^{-1}
\begin{bmatrix}
-\v \\
1
\end{bmatrix}
[-\v^T , 1].
$$
Therefore we have
$$
[\x_{\hat d}^T, x_d]
(A\Sigma A^T)^{-1}
\begin{bmatrix}
\x_{\hat d} \\ x_{d}
\end{bmatrix}=%
[\x_{\hat d}^T, x_d]
\left( \begin{bmatrix}
\Sigma_{\hat d}^{-1} & 0 \\
0 & 0
\end{bmatrix}+
\Sigma_{d}^{-1}
\begin{bmatrix}
-\v \\
1
\end{bmatrix}
[-\v^T , 1] \right)^{-1}
\begin{bmatrix}
\x_{\hat d} \\ x_{d}
\end{bmatrix}=
$$
$$
=\x_{\hat d}^T \Sigma_{\hat d}^{-1} \x_{\hat d}+(x_{d}-\x_{\hat d}^T \v) \Sigma_{ d}^{-1} (x_{d}-\x_{\hat d} \v^T)=%
[\x_{\hat d}^T, x_{d}-\v^T \x_{\hat d}]
\begin{bmatrix}
\Sigma_{\hat d} & 0 \\
0 & \Sigma_{d}
\end{bmatrix}^{-1}
\begin{bmatrix}
\x_{\hat d} \\ x_{d}-\v^T \x_{\hat d} 
\end{bmatrix}.
$$
As a simple consequence we obtain the assertion of the Lemma.
\end{proof}

Now we show that $f$-adapted Gaussian densities are an extension of the classical Gaussian model. 

\begin{theorem}\label{the:gr}
Let $\F= \{ f \colon \R^{d-1} \to \R \colon f(\x)=\v^T \cdot \x \mbox{ for } \v \in \R^{d-1} \}$ be the family of all linear transformations from $\R^{d-1}$ into $\R$. Then 
$$
\A_{\F}(\R^{d-1},\R) = \G(\R^d).
$$
\end{theorem}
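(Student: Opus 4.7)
The plan is to prove the two inclusions separately, using Lemma~\ref{lem:fam} as the main bridge between linear $f$-adapted Gaussians and ordinary Gaussians.

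The inclusion $\A_{\F}(\R^{d-1},\R) \subseteq \G(\R^d)$ is essentially immediate from Lemma~\ref{lem:fam}. Given any $f \in \F$, write $f(\x) = \v^T \cdot \x$, and let $A = \bigl[\begin{smallmatrix} I_{d-1} & 0 \\ \v^T & -1 \end{smallmatrix}\bigr]$. Lemma~\ref{lem:fam} then identifies $N(\m,\Sigma_{\hat d},\ssigma_d,f)$ with the standard Gaussian $N\bigl(A\m,\,A\bigl[\begin{smallmatrix}\Sigma_{\hat d} & 0 \\ 0 & \ssigma_d\end{smallmatrix}\bigr]A^T\bigr) \in \G(\R^d)$ (one only needs $\Sigma_{\hat d}$ positive definite and $\ssigma_d>0$ so that the resulting matrix is a genuine covariance, which follows because $A$ is invertible).

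For the reverse inclusion $\G(\R^d) \subseteq \A_{\F}(\R^{d-1},\R)$, I would start with an arbitrary Gaussian $N(\mu,C) \in \G(\R^d)$ and block-decompose its covariance as $C = \bigl[\begin{smallmatrix} C_{\hat d} & c \\ c^T & c_d \end{smallmatrix}\bigr]$, with $C_{\hat d}\in \M_{d-1}(\R)$, $c \in \R^{d-1}$ and $c_d \in \R$. The goal is to pick $\v \in \R^{d-1}$, $\Sigma_{\hat d}$, $\ssigma_d$ and $\m$ so that the corresponding matrix $A$ from Lemma~\ref{lem:fam} satisfies $A\m = \mu$ and $A\bigl[\begin{smallmatrix}\Sigma_{\hat d} & 0 \\ 0 & \ssigma_d\end{smallmatrix}\bigr]A^T = C$. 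Matching blocks against the formula
\[
A\begin{bmatrix}\Sigma_{\hat d} & 0 \\ 0 & \ssigma_d\end{bmatrix} A^T
= \begin{bmatrix} \Sigma_{\hat d} & \Sigma_{\hat d}\v \\ \v^T \Sigma_{\hat d} & \v^T \Sigma_{\hat d}\v + \ssigma_d \end{bmatrix}
\]
derived in the proof of Lemma~\ref{lem:fam} forces $\Sigma_{\hat d} = C_{\hat d}$, $\v = C_{\hat d}^{-1} c$, and $\ssigma_d = c_d - c^T C_{\hat d}^{-1} c$. A direct computation also shows $A^2 = I$, so $\m := A\mu$ solves $A\m = \mu$. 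Setting $f(\x) := \v^T \cdot \x \in \F$, Lemma~\ref{lem:fam} then gives $N(\mu,C) = N(\m,\Sigma_{\hat d},\ssigma_d,f) \in \A_{\F}(\R^{d-1},\R)$.

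The only nontrivial point, and the main obstacle to make the construction legitimate, is checking that the parameters produced above are admissible: $C_{\hat d}$ must be invertible (and positive definite) so that $\v = C_{\hat d}^{-1} c$ is well-defined and $\Sigma_{\hat d}$ is a valid covariance, and the scalar $\ssigma_d = c_d - c^T C_{\hat d}^{-1} c$ must be positive. Both are standard consequences of the positive definiteness of $C$: the leading principal block $C_{\hat d}$ inherits positive definiteness, and the remaining quantity is exactly the Schur complement of $C_{\hat d}$ in $C$, which is strictly positive precisely because $C$ is positive definite. With this verified, the two inclusions combine to give the claimed equality.
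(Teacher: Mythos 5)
Your proposal is correct and follows essentially the same route as the paper: both directions go through Lemma~\ref{lem:fam}, with the reverse inclusion obtained by taking $\Sigma_{\hat d}$ to be the leading block, $\v$ solved from the off-diagonal block, and $\ssigma_d$ the Schur complement, then pulling the mean back through $A$ (your observation $A^2=I$ matches the paper's use of $A^{-1}\m$). Your explicit check that $C_{\hat d}$ is positive definite and the Schur complement is positive is a detail the paper leaves implicit, but it does not change the argument.
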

\begin{proof}
To prove the assertion, we first show the following inclusion:
$$
\A_{\F}(\R^{d-1},\R) \subset \G(\R^d).
$$
Let $\m \in \R^d$, $\Sigma = 
\begin{bmatrix}
\Sigma_{\hat d} & 0 \\
0 & \Sigma_{d}
\end{bmatrix}$ (where $\Sigma_{\hat d}\in \M_{d-1}$, $\ssigma_{d} \in \R$), $\v \in \R^{d-1}$ and $f(\x)=\v^T \cdot \x$ be given and let $N(\m,\Sigma_{\hat d},\Sigma_{d},f) \in \A_{\F}(\R^{d-1},\R)$.
Thanks to Lemma~\ref{lem:fam} for 
$
A=
\begin{bmatrix}
I & 0 \\
\v^{T} & -1 \\
\end{bmatrix}
$, we have
$$
N(\m,\Sigma_{\hat d},\Sigma_{d},f) = N(A\m,A\Sigma A^T) \in \G(\R).
$$
We now show the opposite inclusion 
$$
\G(\R^d) \subset \A_{\F}(\R^{d-1},\R). 
$$
Let
$
\Sigma = 
\begin{bmatrix}
\Sigma_{11} & \v \\
\v^T & \ssigma_{22}
\end{bmatrix} \in \M_{d}(\R) 
$
and $\m \in \R^d$
be given and let
$ N(\m,\Sigma) \in \G(\R^d)$. 
We put $\Sigma_{\hat d} = \Sigma_{11}$, $\Sigma_{d} = -\v^{T} \Sigma_{11}^{-1} \v + \ssigma_{22}$, $f(\x)=\Sigma_{11}^{-1} \v^T \x$ and
$
A=
\begin{bmatrix}
I & 0 \\
\v^T \Sigma_{11}^{-1} & -1 \\
\end{bmatrix}
$. 
Thanks to Lemma \ref{lem:fam}, we have 
$$
N\left[A^{-1}\m,\Sigma_{\hat d}, \ssigma_{d}, f\right] = 
N\left( \m,
\begin{bmatrix}
I & 0 \\
\v^{T}\Sigma_{11}^{-1} & -1
\end{bmatrix}
\begin{bmatrix}
\Sigma_{11} & 0 \\
0 & -\v^{T} \Sigma_{11}^{-1} \v + \ssigma_{22}
\end{bmatrix}
\begin{bmatrix}
I & \Sigma_{11}^{-1}\v \\
0 & -1
\end{bmatrix} \right)= 
$$ 
$$
=N\left(\m,
\begin{bmatrix}
\Sigma_{11} & 0 \\
\v^{T} & \v^{T} \Sigma_{11}^{-1} \v - \ssigma_{22}
\end{bmatrix}
\begin{bmatrix}
I & \Sigma_{11}^{-1}\v \\
0 & -1
\end{bmatrix} \right)=
N\left(\m,
\begin{bmatrix}
\Sigma_{11} & \v \\
\v^{T} & \ssigma_{22}
\end{bmatrix} \right).
$$
Consequently
$$
N\left(\m,\Sigma \right)= N\left(A^{-1}\m,\Sigma_{\hat d}, \ssigma_{d}, f\right)\in \A_{\F}(\R),
$$ 
what finished the proof.
\end{proof}

The following observation is a corollary of Theorem \ref{the:gr}.

\begin{corollary}
Let $\F\subset \C(\R^{d-1},\R)$ contains the family of all linear transformations from $\R^{d-1}$ into $\R$. Then 
$$
\G(\R^d) \subset \A_{\F}(\R^{d-1},\R).
$$
\end{corollary}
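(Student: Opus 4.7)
The plan is to deduce this corollary directly from Theorem~\ref{the:gr} by a monotonicity argument on the family $\F$. Let $\F_0 := \{f \colon \R^{d-1} \to \R \mid f(\x) = \v^T\cdot \x \text{ for some } \v \in \R^{d-1}\}$ denote the family of all linear transformations, which by hypothesis is contained in $\F$.

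First I would unpack the definition $\A_{\F}(\R^{d-1},\R) = \bigcup_{f \in \F} \A_{f}(\R^{d-1},\R)$ and observe that it is manifestly monotone in the index set: if $\F_0 \subset \F$, then the union over $\F_0$ is a subset of the union over $\F$, so $\A_{\F_0}(\R^{d-1},\R) \subset \A_{\F}(\R^{d-1},\R)$. Next, I would invoke Theorem~\ref{the:gr} applied to $\F_0$, which gives the equality $\A_{\F_0}(\R^{d-1},\R) = \G(\R^d)$. Chaining these two yields $\G(\R^d) = \A_{\F_0}(\R^{d-1},\R) \subset \A_{\F}(\R^{d-1},\R)$, which is the desired assertion.

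There is no real obstacle here — the only subtlety is making sure the monotonicity step is justified by the explicit union-based definition of $\A_{\F}$, rather than by any structural property of the families involved. Since the statement is purely formal, this two-line argument suffices and no further computation (in particular, no re-derivation of the change-of-variables identity from Lemma~\ref{lem:fam}) is needed.
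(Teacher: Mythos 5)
Your argument is correct and coincides with the paper's (implicit) reasoning: the paper states this corollary as an immediate consequence of Theorem~\ref{the:gr}, exactly via the inclusion $\G(\R^d)=\A_{\F_0}(\R^{d-1},\R)\subset\A_{\F}(\R^{d-1},\R)$ obtained from the monotonicity of the union defining $\A_{\F}$. Nothing further is needed.
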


Consequently \afCEC is a natural extension of the classical CEC algorithm. If we consider $\F$ containing only linear transformations, we obtain exactly the CEC algorithm. On the other hand, for wider classes of functions we detect more general clusters, which describe groups concentrated around manifolds which are not necessarily linear. 

\section{Theoretical background of \afCEC}

In this section the theoretical background of \afCEC will be presented.
First, we introduce the cost function which will be minimized by the algorithm. Then we prove that the optimal function which describes each cluster can be obtained by least square regression \cite{bjorck1996numerical}. 
We will end by describing the full algorithm of afCEC. 

Our method is based on the CEC approach. Therefore, we start with a short introduction to the method (for a more detailed explanation
we refer the reader to \cite{tabor2014cross}). To explain CEC we need to introduce the cost function which we want to minimize. 
In the case of splitting of $X \subset \R^d$
into $X_1, \ldots , X_k$ such that
elements of $X_i$ we ”code” by function from family of all Gaussian densities $\G(\R^d)$, the mean code-length of a randomly chosen element $x$ equals
\begin{equation}\label{en:cec}
E(X_1, \ldots, X_k ; \G(\R^d) ):= \sum_{i=1}^{k} p_i \cdot \left( -\ln(p_i) + H^{\times}(X_i\|\G(\R^d) ) \right)
\end{equation}
where $p_i = \frac{|X_i|}{|X|}$.
The formula uses cross-entropy of a data set with respect to the family $\G(\R^d)$.

The aim of CEC is to find splitting of $\R^d$
into sets $X_i$
which minimize the function given in (\ref{en:cec}). Our goal is to calculate an explicit formula for the cost function in the case of $f$-adapted Gaussian densities.

\subsection{Cost function of one cluster}

In this section we will focus on the situation of one cluster $X$. 
In such a case we usually understand the data as a realization of a random variable. Consequently, as an estimator for the mean and covariance, we use 
$$
\mean(X) := \sum \limits_{\x \in X} \frac{\x}{n}, 
$$ 
$$
\cov(X) := \frac{1}{n} \sum \limits_{\x \in X} (\x-\mean(X))(\x-\mean(X))^T.
$$ 

As it was said, CEC uses cross-entropy of data set $X$ with respect to the Gaussian family $\G(\R^d)$.

\begin{theorem}\label{the:gaus_scross}
Let $X \subset \R^{d}$ be given. Then
$$
H^{\times}(X \| \G(\R^d)) = \inf_{g \in \G(\R^d)} H^{\times}(X\|g) = \frac{d}{2}\ln(2 \pi e) + \frac{1}{2} \ln(\det(\Sigma)),
$$
where 
$
\Sigma = \cov(X).
$
\end{theorem}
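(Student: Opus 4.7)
The plan is to unfold the definition of cross-entropy for a Gaussian density, minimize the resulting expression separately over the mean $\m$ and over the covariance $\Sigma$, and check that the minimizers are $\mean(X)$ and $\cov(X)$ respectively. Writing $g = N(\m,\Sigma) \in \G(\R^d)$, for the empirical measure on $X=\{\x_1,\dots,\x_n\}$ we have
\begin{equation*}
H^{\times}(X\|g) \;=\; \frac{d}{2}\ln(2\pi) + \frac{1}{2}\ln\det(\Sigma) + \frac{1}{2n}\sum_{\x\in X}(\x-\m)^T\Sigma^{-1}(\x-\m).
\end{equation*}
So the whole argument reduces to minimizing the last term in $\m$ and the sum of the last two terms in $\Sigma$.

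For the mean, I would use the standard bias-variance decomposition: writing $\x - \m = (\x-\mean(X)) + (\mean(X)-\m)$, the cross terms vanish because $\sum_{\x\in X}(\x-\mean(X))=0$, which leaves
\begin{equation*}
\frac{1}{n}\sum_{\x\in X}(\x-\m)^T\Sigma^{-1}(\x-\m) \;=\; \tr(\Sigma^{-1}\cov(X)) + (\mean(X)-\m)^T\Sigma^{-1}(\mean(X)-\m).
\end{equation*}
The second summand is nonnegative and vanishes precisely at $\m=\mean(X)$, so the optimal mean is $\mean(X)$ regardless of $\Sigma$, and the remaining task is to minimize $\ln\det(\Sigma) + \tr(\Sigma^{-1}S)$ over positive definite $\Sigma$, with $S:=\cov(X)$.

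The main step (and slight obstacle) is this minimization over $\Sigma$. I would rely on the scalar inequality $t-\ln t \geq 1$ for $t>0$, with equality iff $t=1$, applied to the eigenvalues of the matrix $\Sigma^{-1/2} S\,\Sigma^{-1/2}$ (which are positive real numbers and agree with the eigenvalues of $\Sigma^{-1} S$). Summing over eigenvalues gives $\tr(\Sigma^{-1}S) - \ln\det(\Sigma^{-1}S) \geq d$, i.e.
\begin{equation*}
\ln\det(\Sigma) + \tr(\Sigma^{-1} S) \;\geq\; \ln\det(S) + d,
\end{equation*}
with equality iff $\Sigma^{-1/2}S\Sigma^{-1/2}=I$, that is, $\Sigma = S$. (The case where $S$ is singular reduces to an infimum along any sequence $\Sigma_\varepsilon = S + \varepsilon I$, producing $-\infty$, but this only happens when $X$ lies in an affine subspace of $\R^d$, and the stated formula with the convention $\ln 0 = -\infty$ still holds.)

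Plugging $\m=\mean(X)$ and $\Sigma = \cov(X)$ back into the cross-entropy formula yields
\begin{equation*}
H^{\times}(X\|\G(\R^d)) \;=\; \frac{d}{2}\ln(2\pi) + \frac{1}{2}\ln\det(\Sigma) + \frac{d}{2} \;=\; \frac{d}{2}\ln(2\pi e) + \frac{1}{2}\ln\det(\Sigma),
\end{equation*}
which is the asserted identity. The whole argument is classical (it is essentially the maximum-likelihood derivation for a Gaussian), so no novel ingredient beyond the trace–log inequality is required.
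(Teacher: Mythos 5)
Your argument is correct. Note that the paper itself gives no proof of Theorem \ref{the:gaus_scross}: it is recalled as a known result of the cross-entropy clustering framework (cited from \cite{tabor2014cross}), and the derivation used there is precisely the classical maximum-likelihood computation you reproduce — split off the mean by the bias–variance decomposition, then minimize $\ln\det(\Sigma)+\tr(\Sigma^{-1}S)$ via the eigenvalue inequality $t-\ln t\geq 1$ applied to $\Sigma^{-1/2}S\,\Sigma^{-1/2}$. Your remark on singular $\cov(X)$ (the infimum being $-\infty$, consistent with $\ln\det(\Sigma)=-\infty$) is a sensible completion of a case the statement leaves implicit, so nothing is missing.
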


The CEC algorithm will be used for a family of $f$-adapted Gaussian densities.
In such a case the cost function is described by the following theorem.

\begin{theorem}\label{the:ener}
Let $X \subset \R^d$ and a function $f \in \C(\R^{d-1},\R)$ be given. Then
$$
H^{\times}(X \| \A_{f}(\R^{d-1},\R)) = \frac{d}{2} \ln( 2 \pi e ) + \frac{1}{2}\ln( \det(\Sigma_{\hat d}) ) + \frac{1}{2} \ln \left(\frac{1}{n} \sum_{\x \in X} (x_{d}-f(\x_{\hat d})-m_{d})^2 \right),
$$
where 
$
\Sigma_{\hat d} = \cov(X_{\hat k})
$
and
$
m_{d} = \mean(X_{k}).
$

\end{theorem}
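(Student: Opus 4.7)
The plan is to exploit the product structure of the $f$-adapted Gaussian density to split the cross-entropy into a $(d-1)$-dimensional part and a one-dimensional part, then apply Theorem~\ref{the:gaus_scross} to each piece separately.

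First I would write $H^{\times}(X\|\A_f(\R^{d-1},\R)) = \inf_{\m,\Sigma_{\hat d},\ssigma_d} H^{\times}(X\|N(\m,\Sigma_{\hat d},\ssigma_d,f))$ and, by the definition~\eqref{gauss_2ddo} of an $f$-adapted Gaussian as a product of two lower dimensional Gaussians, observe that $-\ln N(\m,\Sigma_{\hat d},\ssigma_d,f)(\x)$ separates as $-\ln N(\m_{\hat d},\Sigma_{\hat d})(\x_{\hat d}) - \ln N(m_d,\ssigma_d)(x_d-f(\x_{\hat d}))$. Averaging over $X$ gives
\[
H^{\times}(X\|N(\m,\Sigma_{\hat d},\ssigma_d,f)) \;=\; H^{\times}(X_{\hat d}\|N(\m_{\hat d},\Sigma_{\hat d})) \;+\; H^{\times}(\ppp{d}{f}\|N(m_d,\ssigma_d)),
\]
where $\ppp{d}{f}$ is exactly the one-dimensional dataset $\{f(\x_{\hat d})-x_d:\x\in X\}$ already introduced in the notation of the paper. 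The key point is that the parameters $(\m_{\hat d},\Sigma_{\hat d})$ and $(m_d,\ssigma_d)$ are independent, so the infimum of the sum equals the sum of the infima.

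Next I would apply Theorem~\ref{the:gaus_scross} in dimension $d-1$ to the first term, yielding
\[
\inf_{\m_{\hat d},\Sigma_{\hat d}} H^{\times}(X_{\hat d}\|N(\m_{\hat d},\Sigma_{\hat d})) \;=\; \tfrac{d-1}{2}\ln(2\pi e) + \tfrac12\ln\det\cov(X_{\hat d}),
\]
and in dimension one to the second term, so that the optimum of $H^{\times}(\ppp{d}{f}\|N(m_d,\ssigma_d))$ is $\tfrac{1}{2}\ln(2\pi e) + \tfrac12\ln\bigl(\tfrac1n\sum_{\x\in X}(x_d-f(\x_{\hat d})-m_d)^2\bigr)$, with the optimising $m_d$ being the mean of the shifted coordinate data $\ppp{d}{f}$, which under the (implicit) centering convention coincides with $\mean(X_d)$ used in the statement. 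Summing the two contributions collects the $\ln(2\pi e)$ prefactors into $\tfrac{d}{2}\ln(2\pi e)$ and yields the claimed formula.

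The only real obstacle is the bookkeeping around the mean $m_d$: strictly speaking the one-dimensional minimisation places $m_d$ at the mean of $\ppp{d}{f}$ rather than at $\mean(X_d)$, and these two quantities agree only if the empirical mean of $f(\x_{\hat d})$ vanishes. I would resolve this by noting that any additive constant in $f$ can be absorbed into $m_d$ without changing the value of $N(\m,\Sigma_{\hat d},\ssigma_d,f)$, so one may assume $\tfrac1n\sum_{\x\in X}f(\x_{\hat d})=0$ without loss of generality, at which point the two means coincide and the displayed equation follows. Everything else is straightforward computation using Theorem~\ref{the:gaus_scross} twice.
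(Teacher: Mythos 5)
Your proof follows exactly the same route as the paper's: decompose the cross-entropy of the $f$-adapted Gaussian into $H^{\times}(X_{\hat d}\|N(\m_{\hat d},\Sigma_{\hat d}))$ plus the one-dimensional cross-entropy of the residuals $x_d-f(\x_{\hat d})$, note the parameters decouple, and apply Theorem~\ref{the:gaus_scross} to each piece. Your additional remark about $m_d$ — that the one-dimensional optimum sits at the mean of the residual data rather than at $\mean(X_d)$ unless a constant is absorbed into $f$ — addresses a minor imprecision that the paper's own proof silently glosses over, and your absorption argument resolves it correctly.
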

\begin{proof}
Let $N(\m,\Sigma_{\hat d},\ssigma_{d},f)(x) \in \A_{f}(\R^{d-1},\R)$, where $\Sigma_{\hat d} \in \M_{d-1}(\R)$, $\ssigma_{d} \in \R$, $\m \in \R^{d}$ and $
\Sigma=
\begin{bmatrix}
\Sigma_{\hat d} & 0 \\
0 & \ssigma_{d}
\end{bmatrix}
$. The assertion of the proposition is a simple consequence of
$$
\begin{array}{l}
H^{\times}(X\|N(\m,\Sigma_{\hat d},\ssigma_{d},f)) =
-\frac{1}{|X|} \sum_{\x \in X} \ln(N(\m,\Sigma_{\hat d},\ssigma_{d},f)(\x))=
\\[6pt]
=-\frac{1}{|X|} \sum \limits_{\x \in X} \ln \left( N(\m_{\hat d},\Sigma_{\hat d})(\x_{\hat d}) \cdot N(m_{d},\ssigma_{d})(x_{d} - f(\x_{\hat d})) \right)=\\[6pt]
=-\frac{1}{|X|} \sum \limits_{\x \in X} \left( \ln \left( N(\m_{\hat d},\Sigma_{\hat d})(\x_{\hat d}) \right) + \ln\left( N(m_{d},\ssigma_{d})(x_{d} - f(\x_{\hat d})) \right) \right)=\\[6pt]
=- \frac{1}{|X|} \sum \limits_{x \in \y} \ln \left( N(\m_{\hat d},\Sigma_{\hat d})(\x_{\hat d}) \right) - \frac{1}{|X|} \sum \limits_{\x \in X} \ln\left( N(m_{d},\ssigma_{d})(x_{d} - f(\x_{\hat d})) \right)=
\\[6pt]
=H^{\times}(X_{\hat d}\|N(\m_{\hat d},\Sigma_{\hat d})) +H^{\times}(\ppp{d}{f} \| N(m_{d}, \ssigma_{d})).
\end{array}
$$
We can use Theorem~\ref{the:gaus_scross} for both summands separately: 
$$
\begin{array}{l}
H^{\times}(X\|\A_{f}(\R^{d-1},\R)) = H^{\times}(X_{\hat d}\|\G(\R^{d-1})) +H^{\times}(\ppp{d}{f} \| \G(\R)) =\\[6pt]
=\frac{d-1}{2}\ln(2 \pi e) + \frac{1}{2} \ln \left( \det \left(\cov(X_{\hat d}) \right) \right) + \frac{1}{2}\ln(2 \pi e) + \frac{1}{2} \ln \left( \frac{1}{n} \sum \limits_{\x \in X} ( x_{d} - f(\x_{\hat d}) - m_{d} )^2 \right).

\end{array}
$$

\end{proof}

As a corollary from the above theorem, we obtain that the optimal from the cross-entropy point of view function which describes a cluster can be obtained by a least squares method \cite{bjorck1996numerical}.

\begin{observation}
Let $X \subset \R^d$ be a data set and a family of functions $\F \subset \C( \R^{d-1}, \R ) $ be given. Then
$$
\argmin_{f \in \F} H^{\times}(X\|\A_f(\R^{d-1},\R^{d}))= \argmin_{f \in \F} \left\{ \sum \limits_{\x \in X} |x_{d}-f(\x_{\hat d})-m_{d}|^2 \right\},
$$
where $m_{d}=\mean(X_{d})$.
\end{observation}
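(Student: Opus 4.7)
My plan is to read off the statement almost directly from Theorem~\ref{the:ener}. That theorem gives the explicit formula
$$
H^{\times}(X \| \A_{f}(\R^{d-1},\R)) = \tfrac{d}{2} \ln(2\pi e) + \tfrac{1}{2}\ln\det(\Sigma_{\hat d}) + \tfrac{1}{2}\ln\!\left(\tfrac{1}{n}\sum_{\x \in X}(x_d - f(\x_{\hat d}) - m_d)^2\right),
$$
where $\Sigma_{\hat d} = \cov(X_{\hat d})$ and $m_d = \mean(X_d)$ depend only on the data $X$, not on the choice of $f$. Hence the first two summands are constants with respect to the optimization variable $f$, and the argmin over $\F$ depends solely on the third summand.

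Next, I would invoke the strict monotonicity of the logarithm on $(0,\infty)$: for any $f \in \F$, the inner sum $\sum_{\x \in X}(x_d - f(\x_{\hat d}) - m_d)^2$ is nonnegative, so
$$
\argmin_{f \in \F} \tfrac{1}{2}\ln\!\left(\tfrac{1}{n}\sum_{\x \in X}(x_d - f(\x_{\hat d}) - m_d)^2\right) = \argmin_{f \in \F} \sum_{\x \in X}(x_d - f(\x_{\hat d}) - m_d)^2,
$$
where the constant factor $\tfrac{1}{n}$ and the multiplier $\tfrac{1}{2}$ are dropped because they do not affect the location of the minimum. Combining these two observations yields the claimed equality of argmin sets.

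The only subtle point worth flagging is the degenerate case in which the sum vanishes; there $\ln$ tends to $-\infty$, so the formula of Theorem~\ref{the:ener} must be understood as an infimum. But since we are comparing $f$'s for which the sum is nonnegative and the logarithm is monotone wherever defined (extended to $-\infty$ at $0$), the equivalence of argmins still holds. No further computation is required; the result is essentially an immediate corollary of Theorem~\ref{the:ener} combined with monotonicity of $\ln$.
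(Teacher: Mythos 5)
Your argument is exactly the paper's (implicit) proof: the Observation is stated there as an immediate corollary of Theorem~\ref{the:ener}, the point being that the first two summands and $m_d$ do not depend on $f$, so by monotonicity of $\ln$ (and dropping the factors $\tfrac1n$, $\tfrac12$) the argmin is determined by the sum of squared residuals alone. Your additional remark about the degenerate case where the sum vanishes is a reasonable refinement, but the approach is the same as the paper's.
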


Consequently, we minimize cross-entropy by finding a least squares estimation. Moreover, if $\F$ is a set of function which are invariant under the operations $f \to a+f$ for any $a$, it is enough to find 
$$
\argmin_{f \in \F} |x_{d}-f(\x_{\hat d})|^2.
$$

\begin{corollary}\label{col:en}
Let $X \subset \R^d$ be a data set, and let a family of functions $\F\subset \C( \R^{d-1}, \R ) $ be invariant under the operations $f \to a+f$ for $a \in \R$.
Let $\bar f \in \F$ be such that $ \bar f = \argmin \limits_{f \in \F} |x_{d}-f(\x_{\hat d})|^2 $. Then
\begin{eqnarray*}
\min_{f \in \F} H^{\times}(X\|\A_f(\R^{d-1},\R^{d}))= 
\frac{d}{2} \ln( 2 \pi e ) + \frac{1}{2}\ln( \det(\Sigma_{\hat d}) ) + \frac{1}{2} \ln \left(\frac{1}{n} \sum_{\x \in X} (x_{d}-\bar f (\x_{\hat d}))^2 \right),
\end{eqnarray*}
where
$\Sigma_{\hat d} = \cov(X_{\hat d})$.
\end{corollary}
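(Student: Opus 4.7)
The plan is to derive the corollary as an immediate consequence of Theorem~\ref{the:ener} together with the invariance assumption on $\F$. I will proceed in two short steps: first peel off the $f$-independent summands from the cross-entropy formula, then use the translation invariance to absorb the mean $m_d$ into the function.

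First I would invoke Theorem~\ref{the:ener}: for every $f \in \F$, the cross-entropy $H^{\times}(X \| \A_f(\R^{d-1},\R))$ equals
$$\frac{d}{2}\ln(2\pi e) + \frac{1}{2}\ln(\det(\Sigma_{\hat d})) + \frac{1}{2}\ln\!\left(\tfrac{1}{n}\sum_{\x \in X}(x_d - f(\x_{\hat d}) - m_d)^2\right),$$
and note that the first two summands are independent of $f$. Since $\ln$ is strictly increasing, minimizing the cross-entropy over $f \in \F$ reduces to minimizing $\sum_{\x \in X}(x_d - f(\x_{\hat d}) - m_d)^2$, which is exactly the content of the Observation preceding the corollary.

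Next I would use the invariance of $\F$ under $f \mapsto f + a$ to eliminate the $m_d$ offset. Define $\Phi\colon \F \to \F$ by $\Phi(f) = f + m_d$; by the invariance hypothesis (applied with $a = m_d$ and $a = -m_d$ respectively) this is a well-defined self-map of $\F$ with two-sided inverse $g \mapsto g - m_d$, hence a bijection. Setting $g = \Phi(f)$ yields
$$\min_{f \in \F}\sum_{\x \in X}(x_d - f(\x_{\hat d}) - m_d)^2 \;=\; \min_{g \in \F}\sum_{\x \in X}(x_d - g(\x_{\hat d}))^2,$$
and by the defining property of $\bar f$ the right-hand side equals $\sum_{\x \in X}(x_d - \bar f(\x_{\hat d}))^2$. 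Substituting back into the expression from Theorem~\ref{the:ener} produces the claimed identity.

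There is no genuine obstacle; the corollary is essentially bookkeeping on top of Theorem~\ref{the:ener}. The only subtlety worth flagging is that the reduction to the raw least-squares problem $\min_{g\in \F}\sum (x_d - g(\x_{\hat d}))^2$ really does require two-sided invariance of $\F$ under constant shifts, so that the reparametrization $f \leftrightarrow f + m_d$ is a bijection of $\F$ rather than merely an injection.
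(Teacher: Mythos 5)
Your argument is correct and takes essentially the same route the paper intends: the paper gives no separate proof of Corollary~\ref{col:en}, deriving it implicitly from Theorem~\ref{the:ener} together with the preceding Observation and the remark that shift-invariance of $\F$ allows one to drop the offset $m_{d}$ from the least-squares problem. Your explicit reparametrization $f \mapsto f + m_{d}$ (a bijection of $\F$ by the two-sided invariance) is simply a careful writing-out of that same step.
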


The above theorem guarantees that the cost function is decreasing during iterations. The analogue of this result does not hold for AcaGMM (PCA is used for finding a local coordinate system). Consequently, in \afCEC (contrary to AcaGMM) we are able to construct a simple stop condition.

\subsection{Coordinate system in afCEC model}

In the previous subsection there was shown how to determine optimal parameters for one cluster in arbitrarily given coordinate system. Now we describe how to fit the optimal one for the \afCEC method.

In AcaGMM the PCA (Principal Component Analysis) was used for finding a locally adapted coordinate system.
Unfortunately, this operation causes problems with convergence (it is hard to construct a reasonable stop condition). More precisely, by using PCA we do not minimize a cost function which is connected with least squares estimation. By applying two methods (PCA and regression) separately we do not minimize any of them. 

In the case of \afCEC all computation use the canonical basis. We need only to decide which coordinate is chosen as dependent (then the rest becomes automatically explanatory).

\begin{figure}[htp]
\begin{center}
\subfigure[The c-type set and parabola fitted with assumption that $x$ is the dependent variable.]
{\label{fig:ex_cor1}
\includegraphics[width=2.5in]{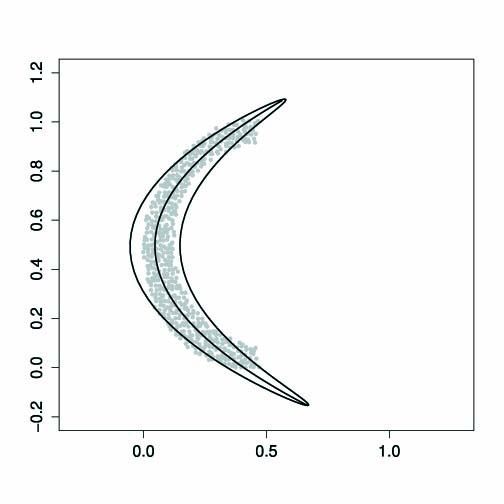}} \quad
\subfigure[The c-type set and parabola fitted with assumption that $y$ is the dependent variable.]
{\label{fig:ex_cor2}
\includegraphics[width=2.5in]{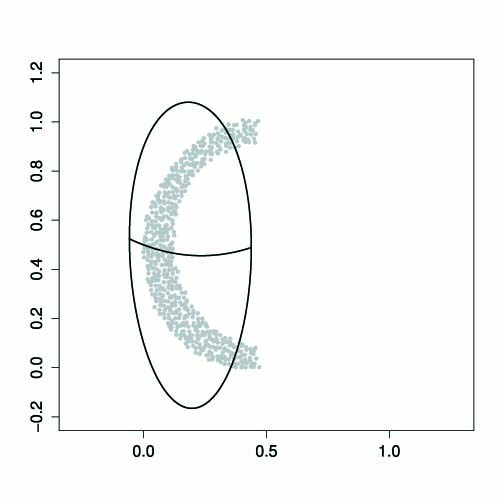}} 
\end{center}
\caption{ Estimation of f-adaptive Gaussian density in two different coordinates.}
\label{fig:ex_cor}
\end{figure}

Our intuition to verify all possible coordinates in the canonical basis came from the Implicit Function Theorem \cite{krantz2002implicit}.
More precisely, under reasonable assumptions, for an implicit function $F(\x) = 0$ where $F \colon \R^{d} \to \R$ and an arbitrary zero $\bar \x \in \R^d$ of $F$, we can 
find $k \in \{1,\ldots,d\}$ and $f:\R^{d-1} \to \R$ such that locally in the
neighborhood of $\bar \x$
$$
\begin{array}{l}
\{\x \in B(\bar \x,r):F(\x)=0\} \\[1ex]
=\{(x_1,\ldots,x_{k-1},f(\x_{\hat k}),x_{k+1},\ldots,x_d): \x_{\hat k} \in B(\x_{\hat k},r) \subset \R^{d-1}\}
\end{array}
$$
where $B(\bar \x,r)$ is a ball with center $\bar \x$ and radius $r$.

Consequently for data $X \subset \R^{d}$ we search for $k = 1, \ldots , d$ and $f$ such that $X$ can be optimally approximated by the set
$$
(x_1,\ldots, x_{k-1},f(\x_{\hat k}),x_{k+1},\ldots,x_{d}) \mbox{ for } \x \in X.
$$ 

\begin{example}
Let us consider a c-type set, see Fig. \ref{fig:ex_cor}. When using the canonical basis of $\R^2$, we have to consider two possible estimated curves (in our case parabolas).
We can treat $x$ as a dependent variable, see Fig. \ref{fig:ex_cor1}, or we choose $y$ as a dependent one, see Fig. \ref{fig:ex_cor2}. If we assume that dependent variable is $x$, we obtain the parabola $x= 1.4755y^2 -1.4602y + 0.4078$, and the sum of squared errors is equal to $1.420948$. On the other hand, if $y$ is the dependent coordinate, we have $y= 0.8x^2 -0.3756x + 0.4997$ with squared errors $53.35997$.
Consequently, the optimal coordinate system is describe by using $x$ as the dependent variable.
\end{example}

In the above example, we consider only $\R^2$ but in higher dimensional spaces we have to consider $d$ different possible choices of dependent variable: ($X_{\hat k},X_{ k})$ for $k \in 1, \ldots, d$. 

In conclusion, for one cluster $X \subset \R^d$ we can estimate parameters of the model in two steps. First, we consider all possible choices of dependent variable: functions $f_k$ (corresponding with relations $x_k=f(\x_{\hat k})$), means $m_{ k} = \mean(X_{k}^{f^k})$, $\m_{\hat k} = \mean(X_{\hat k})$ and covariances $\Sigma_{\hat k} =\cov(X_{\hat k})$, $ \Sigma_{k} = \cov(X_{k}^{f^k})$ for $k=1,\ldots,d$.
Then we determine the optimal dependent variable
$$
j = \argmin_{k=1,\ldots,d} \left\{ H^{\times}\left(X\|N([\m_{\hat k},m_{k}]^T,\Sigma_{\hat k},\ssigma_{k},f_{k}) \right) \right\}.
$$
Consequently, our data set is represented by the function, mean and covariance matrix 
$$
f=f_j
\qquad
\m=[\m_{\hat j},0], \qquad
\Sigma=
\begin{bmatrix}
\Sigma_{\hat j} & 0 \\
0 & \ssigma_{j}
\end{bmatrix}
$$
where subscript $j \in \{1,\ldots, d\}$ denotes the dependent variable in cluster.

The full algorithm can now be described. We use an adapted Lloyd's method which is based on the simultaneous application of two steps. 
First, we construct a new division of $X$ by matching each element  $\x \in X$ to a group such that the cost function is minimal.
Then, we estimate new parameters in each cluster by applying the method presented in previous subsection, see Algorithm \ref{alg1}.

\begin{algorithm}[htp] 
\caption{\afCEC:} 
\label{alg1} 
\begin{algorithmic} 
\STATE {\bf Input}
\STATE\hspace\algorithmicindent number of clusters $k > 0$
\STATE\hspace\algorithmicindent curve family $\F $
\STATE\hspace\algorithmicindent stop condition $\varepsilon > 0$
\STATE\hspace\algorithmicindent dataset $X$ ($d$ - dimension of data) 
\STATE {\bf initial conditions}
\STATE\hspace\algorithmicindent \textit{obtain} initial clustering $X_1,\ldots,X_k$ 
\STATE\hspace\algorithmicindent \textit{obtain} probabilities $ p_i = \frac{|X_i|}{| X |}$ for $i=1,\ldots,k$
\STATE\hspace\algorithmicindent \textit{obtain} parameters of each cluster $f_{i}$, mean $\m_{\hat i}$ and covariances $ \Sigma_{\hat i}$, $\Sigma_{i}$ in each cluster (choosing the best orientation)
\STATE\hspace\algorithmicindent \textit{obtain} cost function 
\STATE\hspace\algorithmicindent $ h_0 = \sum_{i=1}^{k} p_i( -\ln(p_i) + H^{\times}(X_i\| N([\m_{\hat i},0]^T, \Sigma_{\hat i }, \Sigma_{i }) ))  $
\REPEAT
\STATE\hspace\algorithmicindent $n=0$
\STATE\hspace\algorithmicindent \textit{obtain new} clustering $X_{1}, \ldots, X_{k}$ by matching elements to the cluster such that $ \left( - \ln(p_i) - \ln( N([\m_{\hat i},0]^T, \Sigma_{\hat i }, \Sigma_{i }) ) \right) $ is minimal
\STATE\hspace\algorithmicindent \textit{delete} unnecessary clusters ($|X_i| < 1 \% \cdot |X| $) by adding elements to the closest existing one 
\STATE\hspace\algorithmicindent \textit{update} parameter $k$
\STATE\hspace\algorithmicindent $n=n+1$
\STATE\hspace\algorithmicindent \textit{obtain new} probabilities $ p_i = \frac{|X_i|}{|X|}$ for $i=1,\ldots,k$
\STATE\hspace\algorithmicindent \textit{obtain new} parameters of each cluster $f_{i}$, mean $\m_{\hat i}$ and covariances $ \Sigma_{\hat i}$, $\Sigma_{i}$ in each cluster (choosing the best orientation) 
\STATE\hspace\algorithmicindent \textit{obtain new} cost function 
\STATE\hspace\algorithmicindent $ h_n = \sum_{i=1}^{k} p_i( -\ln(p_i) + H^{\times}(X_i\| N([\m_{\hat i},0]^T, \Sigma_{\hat i }, \Sigma_{i }) ))  $ 
\UNTIL{$h_n \geq h_{n-1} - \varepsilon$}
\end{algorithmic}
\end{algorithm}

\section{Experiments and analysis}

In this section we present a comparison of the \afCEC method with AcaGMM, GMM and CEC. 
Since AcaGMM is not a density model, the Log-likelihood function is not well-defined. Nevertheless, by the input the Jacobian of AcaGMM transformation, we obtain
a valid probability distribution, see Appendix.
\begin{figure}[htp]
\begin{center}
\subfigure[\afCEC]
{\label{fig:cir1}
\includegraphics[width=1.1in]{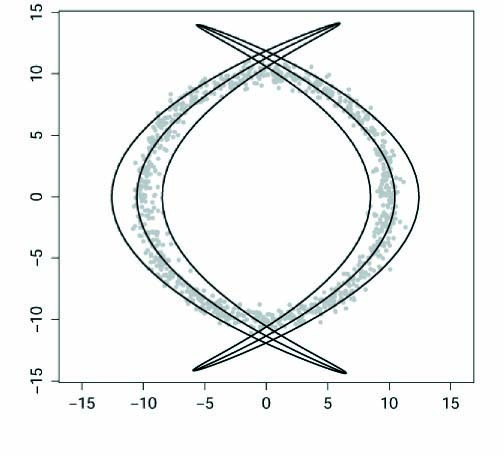}} 
\subfigure[AcaGMM]
{\label{fig:cir2}
\includegraphics[width=1.1in]{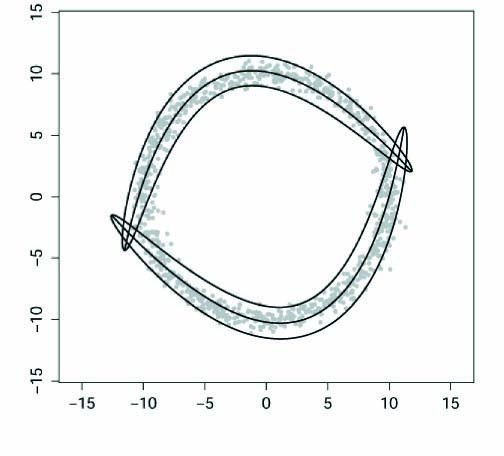}} 
\subfigure[GMM.]
{\label{fig:cir3}
\includegraphics[width=1.1in]{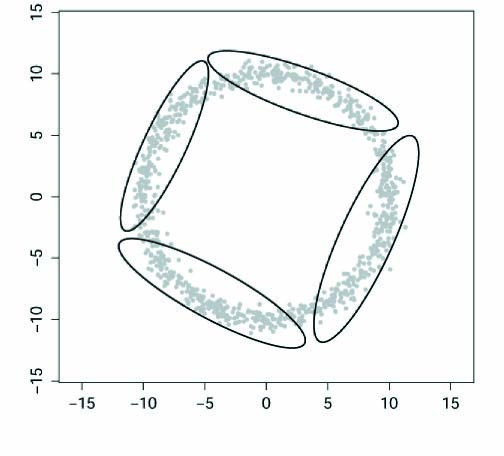}} 
\subfigure[CEC]
{\label{fig:cir4}
\includegraphics[width=1.1in]{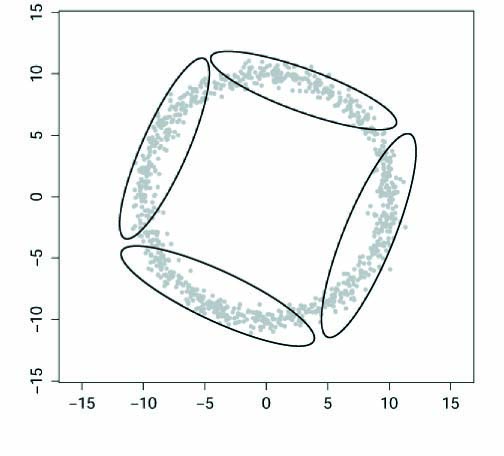}} \\
\subfigure[Evolution of Log-likelihood function when the number of clusters increases from $1$ to $10$.]
{\label{fig:cir5}
\includegraphics[width=1.5in]{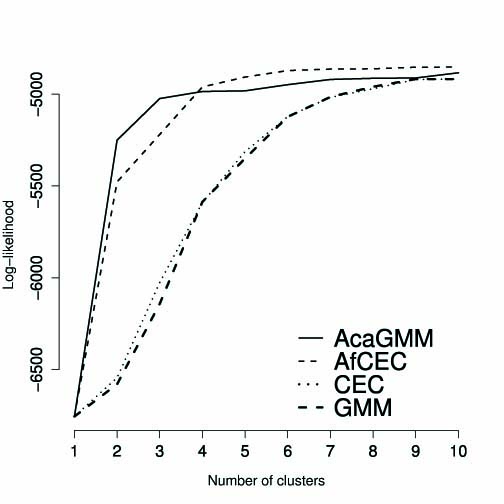}} \quad
\subfigure[Evolution of Log-likelihood function when the number of parameters increases from $1$ to $80$.]
{\label{fig:cir6}
\includegraphics[width=1.5in]{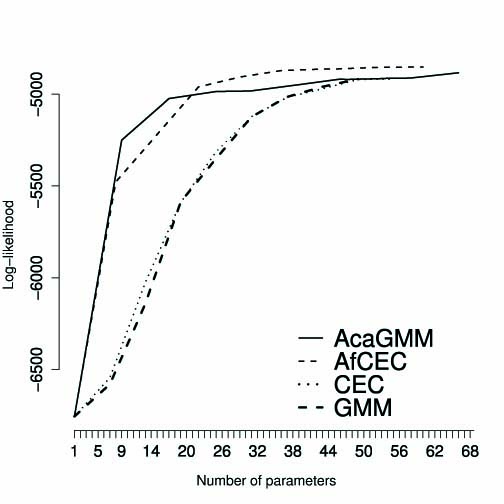}} \quad
\subfigure[Evolution of BIC function when the number of cluster increases from $1$ to $10$.]
{\label{fig:cir7}
\includegraphics[width=1.5in]{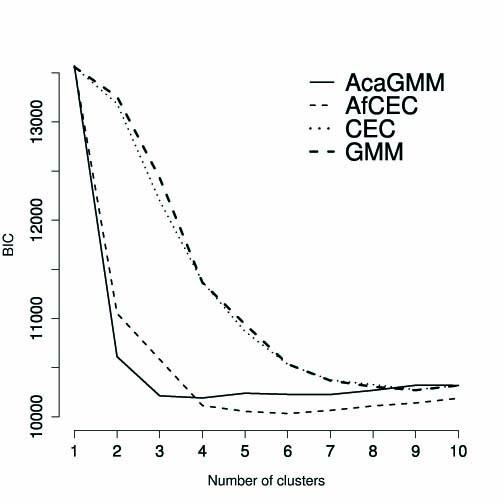}} 
\end{center}
\caption{ Results of afCEC, AcaGMM, CEC and GMM in the case of circle-type set.}
\label{fig:cir}
\end{figure}
To compare the results we use the standard Bayesian Information Criterion (BIC) 
$$
BIC = -2LL+k\log(n)
$$ 
and Akaike Information Criterion (AIC) 
$$
ACI = -2LL+2k,
$$ 
where $k$ is a number of parameters in the model, $n$ is a number of points, and $LL$ is a maximized value of the Log-likelihood function.
Consequently, we need a number of parameters which are used in each model. 
In a case of $\R^2$, AcaGMM uses two scalars for mean, three scalars for covariance matrix, two scalars for parabola and one for local coordinate system (obtained by PCA). On the other hand, in \afCEC we do not need scalar for the local coordinate system. Consequently, \afCEC uses two scalars for mean, three scalars for covariance matrix and two scalars for parabola\footnote{It should be emphasized that in \afCEC we need to remember which coordinate is the dependent one. This parameter is discrete so we do not consider it in our investigation.}. 

Let us start from a synthetic data set. First, we report the results of afCEC, AcaGMM, CEC and GMM in the case of a circle-type set, see Fig.~\ref{fig:cir}. 
Fig.~\ref{fig:cir5} shows how the Log-likelihood function changes when the number of clusters increases from $1$ to $10$. Similar relation, in respect to number of parameters\footnote{Plots which present relation between Log-likelihood functions and number of parameters was constructed by linear approximation of known values of the function.}, is presented in Fig.~\ref{fig:cir6}.  For a similar values of Log-likelihood function, we need 2 clusters in \afCEC and AcaGMM and 4 in GMM and CEC, see Fig.~\ref{fig:cir}. 
In such a case, the BIC criterion shows that algorithms which use curved densities model better fit data with using smaller number of parameters.
  
\begin{figure}[htp]
\begin{center}
\subfigure[AfCEC]
{\label{fig:sp1}
\includegraphics[width=1.1in]{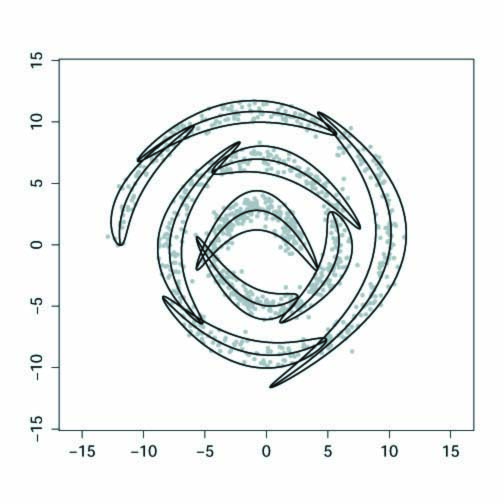}} 
\subfigure[AcaGMM]
{\label{fig:sp2}
\includegraphics[width=1.1in]{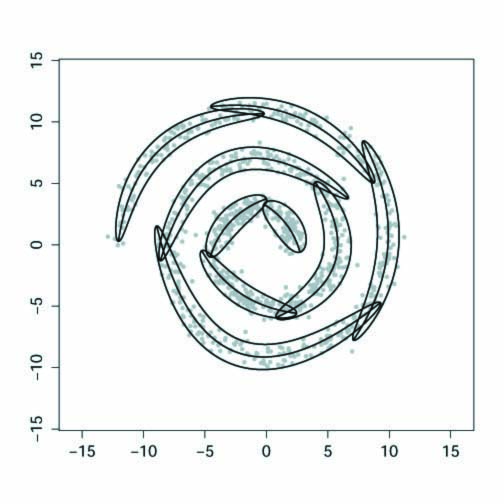}} 
\subfigure[GMM]
{\label{fig:sp3}
\includegraphics[width=1.1in]{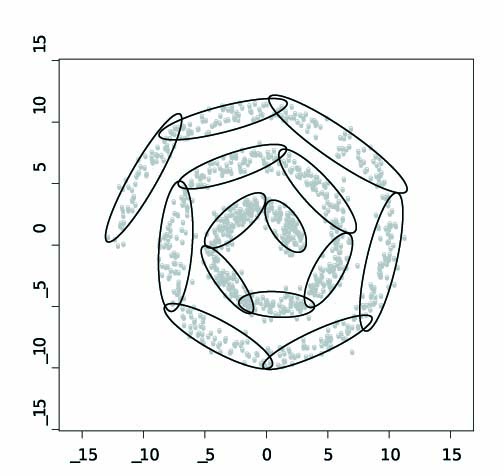}} 
\subfigure[CEC]
{\label{fig:sp3}
\includegraphics[width=1.1in]{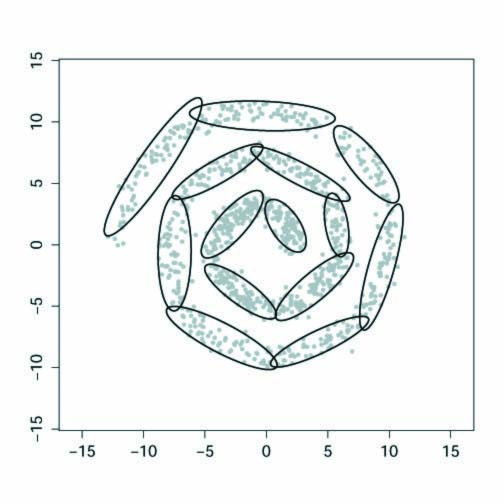}} 

\subfigure[Evolution of Log-likelihood function when the number of clusters increases from $1$ to $15$.]
{\label{fig:sp5}
\includegraphics[width=1.5in]{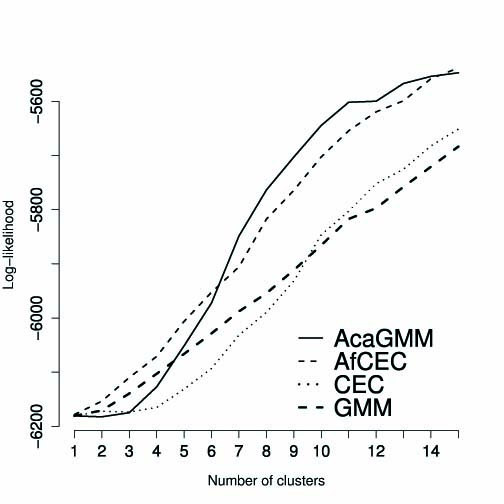}} \quad
\subfigure[Evolution of Log-likelihood function when the number of parameters increases from $1$ to $120$.]
{\label{fig:sp6}
\includegraphics[width=1.5in]{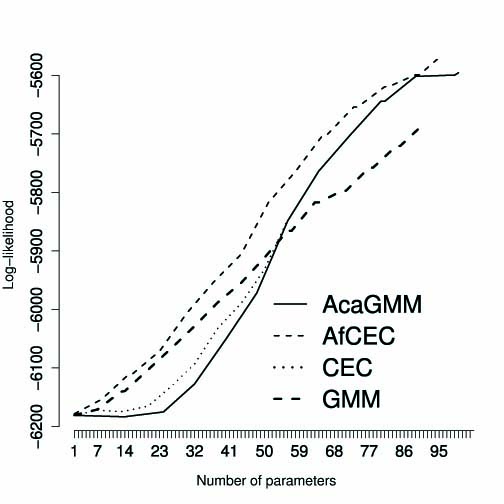}} \quad
\subfigure[Evolution of BIC function when the number of cluster increases from $1$ to $10$.]
{\label{fig:sp7}
\includegraphics[width=1.5in]{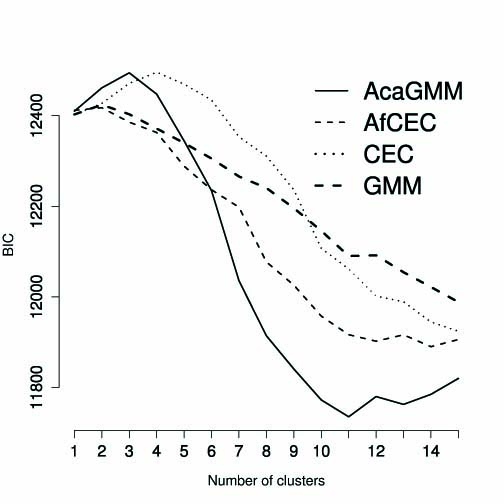}} 
\end{center}
\caption{ Results of afCEC, AcaGMM, CEC and GMM in the case of spiral-type set.}
\label{fig:sp}
\end{figure}

Similar situation can be observed in a more complex case of spiral-type set, see Fig. \ref{fig:sp}.
In Table \ref{tab:real_datasets_LL}, the mean and maximum value of Log likelihood for 100 initializations of algorithms are shown. As we see for a similar values of Log-likelihood function, we have to use 9 clusters for \afCEC and AcaGMM and 14 for GMM and CEC. The comparison of algorithms by using BIC and AIC with similar values of Log-likelihood function we present in Tab. \ref{tab:spirala_datasets}. 

Algorithms which are able to adapt to curve type structures (AcaGMM, \afCEC) better fit data. More precisely, the Log-likelihood function takes a larger value with the same number of parameters, see Fig.~\ref{fig:cir6} and Fig.~\ref{fig:sp6}. Since Log-likelihood increases with growing of the number of classes, we use BIC criterion which takes into account the number of parameters. In the case of AcaGMM and afCEC, we obtain optimal value of BIC after about 4-6 iterations.
In conclusion, AcaGMM and \afCEC better fit data (yield a higher value of Log-likelihood function) while require lower number of parameters. 

Algorithms AcaGMM and \afCEC give a comparable value of Log-likelihood, see Fig.~\ref{fig:cir5} and Fig.~\ref{fig:sp5}. Nevertheless, \afCEC uses less parameters, see Fig.~\ref{fig:cir6} and Fig.~\ref{fig:sp6}. Moreover, strong theoretical background of the method guarantees that the cost function decreases in each iteration. Consequently, we obtain a simple stop condition for our method.


\begin{table}[!t]
\centering
\scalebox{0.7}{ 
\begin{tabular}{ c | c c c | c c c | c c c | c c c }
& \multicolumn{3}{c}{\afCEC} & \multicolumn{3}{c}{AcaGMM} & \multicolumn{3}{c}{GMM} & \multicolumn{3}{c}{CEC} \\ 
& NP & mean LL & max LL & NP & mean LL & max LL & NP & mean LL & max LL & NP & mean LL & max LL \\
\hline
1 & 7 & -6178,30 & -6178,30 & 8 & -6180,86 & -6180,86 & 6 & -6180,68 & -6180,68 & 6 & -6180,68 & -6180,68 \\
2 & 14 & -6153,61 & -6069,15 & 16 & -6182,41 & -6104,58 & 12 & -6172,16 & -6157,13 & 12 & -6170,67 & -6127,52 \\
3 & 21 & -6109,99 & -6012,19 & 24 & -6174,88 & -6068,96 & 18 & -6173,61 & -6128,87 & 18 & -6139,47 & -6088,73 \\
4 & 28 & -6070,96 & -5924,87 & 32 & -6127,14 & -5987,66 & 24 & -6165,16 & -6062,66 & 24 & -6102,95 & -6041,99 \\
5 & 35 & -6006,17 & -5868,56 & 40 & -6051,35 & -5836,19 & 30 & -6131,26 & -6026,12 & 30 & -6066,26 & -5989,85 \\
6 & 42 & -5952,44 & -5713,61 & 48 & -5972,21 & -5667,10 & 36 & -6093,05 & -5990,69 & 36 & -6028,42 & -5953,28 \\
7 & 49 & -5905,57 & -5675,39 & 56 & -5848,57 & -5558,34 & 42 & -6031,69 & -5930,99 & 42 & -5987,86 & -5882,36 \\
8 & 56 & -5817,57 & -5612,98 & 64 & -5763,63 & -5511,39 & 48 & -5989,59 & -5868,69 & 48 & -5954,45 & -5865,29 \\
9 & 63 & -5764,29 & -5509,13 & 72 & -5702,82 & -5482,30 & 54 & -5931,64 & -5814,95 & 54 & -5911,52 & -5804,07 \\
10 & 70 & -5702,46 & -5494,73 & 80 & -5644,46 & -5460,11 & 60 & -5846,39 & -5741,61 & 60 & -5865,69 & -5766,89 \\
11 & 77 & -5654,33 & -5441,22 & 88 & -5601,65 & -5435,63 & 66 & -5802,84 & -5689,48 & 66 & -5817,09 & -5713,91 \\
12 & 84 & -5619,46 & -5410,99 & 96 & -5599,81 & -5448,04 & 72 & -5752,16 & -5636,58 & 72 & -5797,22 & -5664,69 \\
13 & 91 & -5598,93 & -5430,61 & 104 & -5566,99 & -5423,52 & 78 & -5725,24 & -5609,44 & 78 & -5757,77 & -5623,56 \\
14 & 98 & -5558,18 & -5384,77 & 112 & -5553,93 & -5420,68 & 84 & -5682,13 & -5542,43 & 84 & -5720,74 & -5563,87 \\
15 & 105 & -5538,44 & -5392,29 & 120 & -5547,13 & -5431,19 & 90 & -5651,20 & -5554,23 & 90 & -5683,41 & -5555,02 \\

\end{tabular}
}
\caption{ Comparison of the afCEC, CEC and GMM Chinese and Latin characters.}
\label{tab:real_datasets_LL}
\end{table}


\begin{table}
\centering
\begin{tabular}{ l | c c c c c }
\hline
Algorithms & Number of & Number of & Log-likelihood & BIC & AIC \\
& clusters & parameters & & & \\
\hline
\afCEC & 9 & 9$\cdot$7$=$63 & -5508.83 & 11452.85 &11143.66 \\
AcaGMM & 9 & 9$\cdot$8$=$72 & -5497.11 & 11491.58 & 11138.22 \\
GMM & 14 & 14$\cdot$6$=$84 & -5520.96 & 11622.17 & 11209.92\\
CEC & 14 & 14$\cdot$6$=$84 & -5510.09 & 11600.44 & 11188.18 \\
\end{tabular}
\caption{Comparison of afCEC, AcaGMM, CEC and GMM in the case of spiral-type set, see Fig. \ref{fig:sp}.}
\label{tab:spirala_datasets}
\end{table}

Chinese characters mainly consist of straight-line strokes
(horizontal, vertical) and curve strokes (slash, backslash and
many types of hooks). GMM has already been employed for
structure analysis of Chinese characters, and achieves commendable
performance \cite{zhang2004competitive}. However, some lines extracted
by GMM may be too short and is quite difficult to
join these short lines to form semantic strokes due to the
ambiguity of joining. This problem becomes more serious
when analyzing handwritten characters by GMM, and this was the
motivation to use AcaGMM to represent Chinese characters.
In Tab.~\ref{tab:real_datasets}, we present a comparison of afCEC, AcaGMM, GMM and CEC on Chinese and Latin characters: \begin{CJK}{UTF8}{gbsn}
犬 (dog), 乞 (beg), 父 (father), 仉 (mother), 火 (fire), 主 (master), b, R, S. The number of clusters has been determined so as to obtain a similar value of Log-likelihood function.
\end{CJK}


\begin{table}[!t]
\centering
\scalebox{0.7}{ 
\begin{tabular}{ c | c c c | c c c | c c c | c c c }
& \multicolumn{3}{c}{\afCEC} & \multicolumn{3}{c}{AcaGMM} & \multicolumn{3}{c}{GMM} & \multicolumn{3}{c}{CEC} \\ 
& NP & LL & BIC & NP & LL & BIC & NP & LL & BIC & NP & LL & BIC \\
\hline
\begin{CJK}{UTF8}{gbsn}
犬 
\end{CJK}
& 5 & 1148.57 & -2071.93 & 5 & 1030.02 & -1802.66 & 7 & 1060.26 & -1850.18 & 7 & 1015.29 & -1760.33 \\
\begin{CJK}{UTF8}{gbsn}
乞 
\end{CJK}
& 5 & 1000.78 & -1770.42 & 5 & 959.71 & -1655.26 & 7 & 1170.85 & -2064.33 & 7 & 1175.96 & -2074.55 \\
\begin{CJK}{UTF8}{gbsn}
父 
\end{CJK}
& 4 & 1009.02 & -1836.69 & 4 & 880.32 & -1553.38 & 5 & 824.51 & -1454.71 & 5 & 811.76 & -1429.21 \\
\begin{CJK}{UTF8}{gbsn}
父 
\end{CJK}
& 4 & 1009.02 & -1836.69 & 4 & 880.32 & -1553.38 & 6 & 1027.55 & -1821.93 & 6 & 1032.92 & -1832.67 \\
\begin{CJK}{UTF8}{gbsn}
仉 
\end{CJK}
& 6 & 1329.01 & -2372.74 & 6 & 1272.74 & -2219.45 & 8 & 1364.94 & -2403.85 & 8 & 1422.27 & -2518.51 \\
\begin{CJK}{UTF8}{gbsn}
火 
\end{CJK}
& 4 & 1045.53 & -1911.53 & 4 & 921.65 & -1638.12 & 5 & 900.25 & -1608.15 & 5 & 902.12 & -1611.89 \\
\begin{CJK}{UTF8}{gbsn}
火 
\end{CJK}
& 4 & 1045.53 & -1911.53 & 4 & 921.65 & -1638.12 & 6 & 1017.13 & -1803.44 & 6 & 1018.31 & -1805.79 \\
\begin{CJK}{UTF8}{gbsn}
主 
\end{CJK}
& 5 & 1011.27 & -1794.47 & 5 & 962.93 & -1665.21 & 7 & 1079.99 & -1840.69 & 7 & 1181.03 & -2042.77 \\
b
& 3 & 2660.87 & -5158.99 & 3 & 2738.24 & -5290.49 & 4 & 2686.59 & -5187.19 & 4 & 2678.49 & -5170.99 \\
R
& 3 & 1911.73 & -3652.67 & 3 & 1578.61 & -2962.04 & 4 & 1996.56 & -3797.94 & 4 & 1989.31 & -3783.43 \\
S
& 3 & 1883.88 & -3604.71 & 3 & 1907.83 & -3629.32 & 4 &1875.93 & -3565.52 & 4 & 1866.01 & -3545.68 \\
\end{tabular}
}
\caption{Comparison of the afCEC, AcaGMM, CEC and GMM methods for Chinese and Latin characters.}
\label{tab:real_datasets}
\end{table}



\section{Appendix--AcaGMM Gaussian model}

As it was previously mentioned, AcaGMM does not use densities. More precisely, the Jacobian of the transformation was not taken into consideration. However, the EM procedure, which was used in AcaGMM, works with probability distributions. Therefore, from the theoretical point of view the above procedure is incorrect. Moreover, if we want to compare our method by using of the Log-likelihood function we need densities. 

Let us start from numerical integration of the original AcaGMM function and of the model rescaled by Jacobian correction. The Simpson method \cite{james1985applied}, on the square $[-5,5] \times [-5,5]$ with $50 000$ segments was used. 
The integral in the case of AcaGMM is equal to $1.038$. After correction we obtain $1$ (with a precision of $10^4$).

Let us consider situation of the AcaGMM model.
Suppose $X$ and $Y$ are zero mean independent Gaussian distributions with variances $\sigma_1, \sigma_2$:
$$
N_{XY}(x,y)=
\frac{1}{\sqrt{2 \pi} \sigma_1 \sigma_2}
\exp\left(-\frac{x^2+y^2}{2 \sigma_1 \sigma_2} \right).
$$
Moreover, let
$$
Z = g(X,Y), \quad W = h(X,Y),
$$
where $g,h \in \C(\R^2,\R)$.
Let $J(x,y)$ represent the Jacobian of the original 
transformation 
$$
J(x,y)=\det
\begin{bmatrix}
\frac{\partial g(x,y)}{\partial x} & \frac{\partial g(x,y)}{\partial y} \\
\frac{\partial h(x,y)}{\partial x} & \frac{\partial h(x,y)}{\partial y}
\end{bmatrix}.
$$
In such a case, we have
$$
N_{ZW}(z,w) = 
\sum_{\{(x,y)\in \R^2 \colon (g(x,y),h(x,y)) = (z,w)\}} \frac{ N_{XY}(x,y)}{| J (x,y) |}.
$$

Let us consider the function $f$ expressed as parametric equation  
 $f:=\{(x(t),y(t)) \colon t \in \R\}$ (in the case of AcaGMM it is a parabola). Using the formula from \cite[Table 1.]{zhang2005active} we obtain the orthogonal projection $(x(t_0),y(t_0))$ of point $(p_1,p_2)$ on curve $f$: 
$$
t_0= p_f(p_1,p_2)=
\left\{ 
\begin{array}{l l }
\sqrt[3]{R + \sqrt{D}} + \sqrt[3]{ R - \sqrt{D}} & D>0  \\
0, \ 0, \ 0 & D = 0, Q=R=0 \\
2\sqrt{-Q}, \ -\sqrt{-Q}, \ -\sqrt{-Q} & D = 0, Q \neq 0, R \neq 0 \\
2\sqrt{-Q} \ cos(\frac{\phi+2i\pi}{3}), i=0,1,2, & D<0 \\
\mbox{ where } \phi=acos \left(\frac{R}{\sqrt{-Q^3}}\right) & 
\end{array} \right.
$$ 
where
$
Q= \frac{ 1 - 2 a p_2 }{6 a^2},
$
$
R = \frac{p_1}{4a^2}
$ and
$
D= Q^3+R^2.
$ 

On the other hand, the arc length of $f$ between zero and $(x(t_0),y(t_0))$ \cite[Formula (10)]{zhang2005active} is given by
$$
l(t_0) = \frac{1}{2}|t_0|\sqrt{ 1 + 4a^2 t_0^2 }+
\frac{1}{4a} \ln\left( 2|a| t_0 + \sqrt{ 1 + 4 a^2 t_0^2 } \right).
$$
Consequently, we have
$$
g^{-1}(p_1,p_2)=\| (x(t_0),y(t_0)) - (p_1,p_2)\|, 
$$
$$
h^{-1}(p_1,p_2)= l(t_0),
$$
where $ t_0= p_f(p_1,p_2)$.

\begin{figure}[htp]
\begin{center} 
\begin{tikzpicture}[scale=4]
\draw [<->, help lines] (1.5,1) -- (1.5,0) -- (2.5,0);
\draw [ultra thick,black,domain=1.5:2.5] plot (\x, {(1*(\x-1.5)^2 });

\draw [fill] (1.5+0.4,0.6) circle [radius=0.02];
\node[opacity=1] at (1.5+0.4,0.6+.1) {$(p_1,p_2)$};
\draw [fill] (1.5+0.6416396472194656,0.6416396472194656^2) circle [radius=0.02];
\node[opacity=1] at (1.8+0.6416396472194656,0.6416396472194656^2) {$(x(t),y(t))$};
\draw [ black] (1.5+0.6416396472194656,0.6416396472194656^2) -- (1.5+0.4,0.6);
\node[opacity=1] at (1.98,0.45) {$p$};
\node[opacity=1] at (1.8,0.2) {$l$};

\draw [<->, help lines] (0,1) -- (0,0) -- (1,0);
\draw [ultra thick,black,domain=0:1] plot (\x, {(0 });

\draw [fill] (0.7889957113463548,0.3063430560334737) circle [radius=0.02];
\node[opacity=1] at ((0.7889957113463548,0.3063430560334737+.1) {$\left(h^{-1}(p_1,p_2),g^{-1}(p_1,p_2) \right)$};
\draw [fill] (0.7889957113463548,0) circle [radius=0.02];
\draw [ black] (0.7889957113463548,0.3063430560334737) -- (0.7889957113463548,0);
\node[opacity=1] at (.74,0.15) {$p$};
\node[opacity=1] at (.4,0.1) {$l$};
\draw [ultra thick,black,->] (0.7,1.1)--(2.1,1.1);
\node[opacity=1] at ((1.4,1.2) {$(h(x,y),g(x,y))$};

\end{tikzpicture}
\end{center}
\caption{The transformation used in AcaGMM.}
\label{fig:AcaGMMFix}
\end{figure}
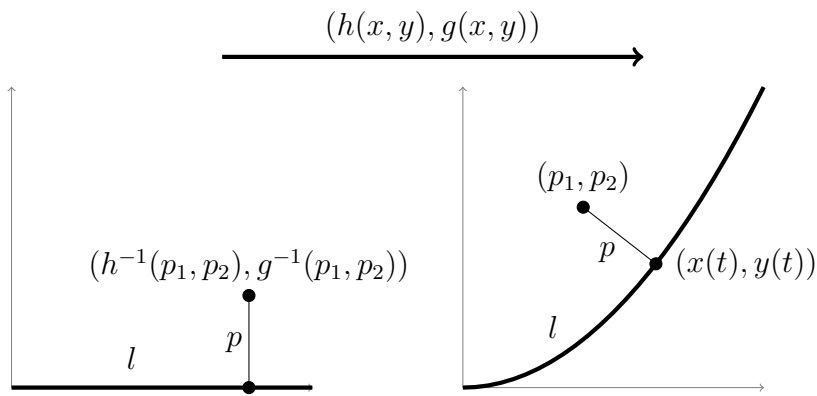

Our goal is to determine the Jacobian of our transformation, see Fig. \ref{fig:AcaGMMFix}. Let us consider an arbitrary small neighborhood of $(x(t_0),y(t_0))$. In such a case, the local curvature of $f$ at $(x(t_0),y(t_0))$ is the same as the curvature of the osculating circle\footnote{In differential geometry of curves, the osculating circle of a sufficiently smooth plane curve at a given point p on the curve has been traditionally defined as the circle passing through p and a pair of additional points on the curve infinitesimally close to p. Its center lies on the inner normal line, and its curvature is the same as that of the given curve at that point. This circle, which is the one among all tangent circles at the given point that approaches the curve most tightly, was named circulus osculans (Latin for ``kissing circle") by Leibniz.} at $(x(t_0),y(t_0))$. 

The radius of curvature in the case of parametric form of curve is given by
$$
r=\frac{(x'^2+y'^2)^{\frac{3}{2}}}{x'y''-y'x''}.
$$
Consequently, our goal is to determinate how a set is changing under the influence of the transformation, see Fig. \ref{fig:AcaGMMFix1}.

A small square neighborhood of the point $(p_1,p_2)$ is mapped to a trapezoid (asymptotically when a size of square converges to zero). 
This operation is showed in Fig. \ref{fig:AcaGMMFix1}.
It is easy to see that the square area changes linearly depending on the distance $p$. 
If we consider the situation where $p=r$, we obtain that our square is collapsed to a point. Consequently, for points above the curve Jacobian is
asymptotically proportional to
$$
\frac{r-p}{r}=1 - \frac{p}{r}.
$$
In a natural way, if a point $(p_1,p_2)$ is under the curve, the square area is increasing under the influence of the transformation. Therefore, the Jacobian is asymptotically equal to
$$
\frac{r+p}{r}=1 + \frac{p}{r}.
$$

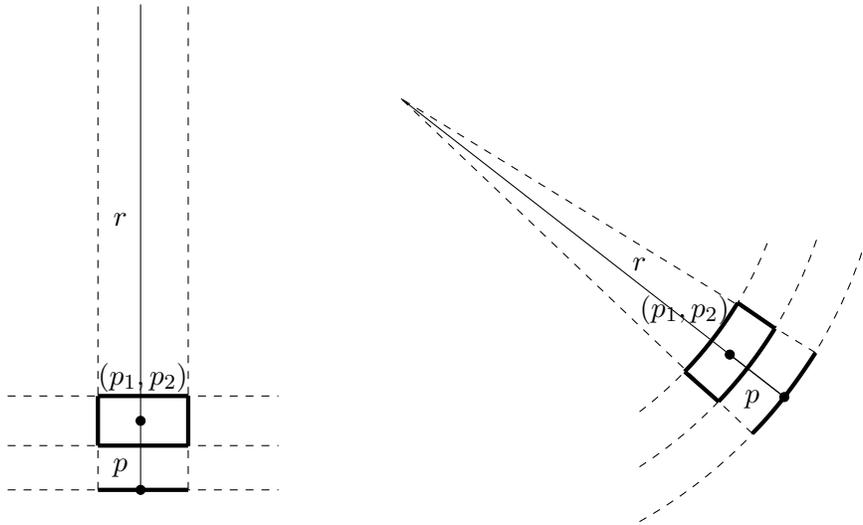
\begin{figure}[htp]
\begin{center} 
\begin{tikzpicture}[scale=3]
\draw [ultra thick,black,domain=2:2.28] plot (\x, {(1*(\x-1.5)^2 });
\draw [dashed] (2,0.5^2) -- (1.5-1.05666,1.7351);
\draw [dashed] (2.28,0.78^2) -- (1.5-1.05666,1.7351);

\draw [fill] (1.5+0.4,0.6) circle [radius=0.02];
\node[opacity=1] at (1.7,0.8) {\small $(p_1,p_2)$};
\node[opacity=1] at (1.5,1) {\small $r$};
\node[opacity=1] at (2,0.4) {\small $p$};
\draw [fill] (1.5+0.6416396472194656,0.6416396472194656^2) circle [radius=0.02];
\draw [ black] (1.5+0.6416396472194656,0.6416396472194656^2) -- (1.5+0.4,0.6);
\draw [ black] (1.5+0.6416396472194656,0.6416396472194656^2) -- (1.5-1.05666,1.7351);
\draw [dashed,black,domain=1.5:2.52] plot (\x, {(-sqrt(2.15305^2-(\x-(1.5-1.05666))*(\x-(1.5-1.05666)) )+1.7351});
\draw [ultra thick,black,domain=1.85:2.1] plot (\x, {(-sqrt(1.945^2-(\x-(1.5-1.05666))*(\x-(1.5-1.05666)) )+1.7351});
\draw [dashed,black,domain=1.5:2.29] plot (\x, {(-sqrt(1.945^2-(\x-(1.5-1.05666))*(\x-(1.5-1.05666)) )+1.7351});

\draw [ultra thick,black,domain=1.7:1.935] plot (\x, {(-sqrt(1.745^2-(\x-(1.5-1.05666))*(\x-(1.5-1.05666)) )+1.7351});
\draw [dashed,black,domain=1.5:2.07] plot (\x, {(-sqrt(1.745^2-(\x-(1.5-1.05666))*(\x-(1.5-1.05666)) )+1.7351});

\draw [ultra thick,black] (1.7,0.524385) -- (1.85,0.391846);
\draw [ultra thick,black] (1.935,0.829575) -- (2.1,0.716031);
\draw [ultra thick,black,domain=-0.4-0.5:0-0.5] plot (\x, {(0 });
\draw [dashed] (-0.8-0.5,0) -- (0.4-0.5,0);
\draw [dashed] (-0.4-0.5,0) -- (-0.4-0.5,2.15305);
\draw [dashed] (0-0.5,0) -- (0-0.5,2.15305);

\draw [ultra thick,black] (-0.4-0.5,0.3063430560334737+0.11) -- (0-0.5,0.3063430560334737+0.11);
\draw [ultra thick,black] (-0.4-0.5,0.3063430560334737-0.11) -- (0-0.5,0.3063430560334737-0.11);
\draw [ultra thick,black] (-0.4-0.5,0.3063430560334737+0.11) -- (-0.4-0.5,0.3063430560334737-0.11);
\draw [ultra thick,black] (0-0.5,0.3063430560334737-0.11) -- (0-0.5,0.3063430560334737+0.11);

\draw [dashed] (-0.8-0.5,0.3063430560334737+0.11) -- (0.4-0.5,0.3063430560334737+0.11);
\draw [dashed] (-0.8-0.5,0.3063430560334737-0.11) -- (0.4-0.5,0.3063430560334737-0.11);
\node[opacity=1] at (-0.8,0.1) {\small $p$};
\node[opacity=1] at (-0.8,1.2) {\small $r$};
\node[opacity=1] at (-0.7,0.5) {\small $(p_1,p_2)$};
\draw [fill] (0.7889957113463548-1.5,0.3063430560334737) circle [radius=0.02];
\draw [fill] (0.7889957113463548-1.5,0) circle [radius=0.02];
\draw [black] (0.7889957113463548-1.5,0) -- (0.7889957113463548-1.5,0.3063430560334737);
\draw [black] (0.7889957113463548-1.5,2.15305) -- (0.7889957113463548-1.5,0.3063430560334737);

\end{tikzpicture}

\end{center}
\caption{Transformation of a square neighborhood of a point $(p_1,p_2)$ under the influence of the AcaGMM function.}
\label{fig:AcaGMMFix1}
\end{figure}

\begin{figure}[htp]
\begin{center} 
\begin{tikzpicture}[scale=0.5]
\draw [<->, help lines] (0,9) -- (0,0) -- (5,0);
\draw [-, help lines] (0,-1) -- (0,0) -- (-5,0);
\draw [ultra thick,black,domain=-3:3] plot (\x, {(\x)^2 });

\draw [fill] (4,4) circle [radius=0.1];
\node[opacity=1] at (4,4.5) {$(4,4)$};

\draw [fill] (-1.459261,2.129444) circle [radius=0.1];
\draw [ black] (-1.459261,2.129444) -- (4,4);

\draw [fill] (-0.6498321,0.4222817) circle [radius=0.1];
\draw [ black] (-0.6498321,0.4222817)-- (4,4);

\draw [fill] (2.109093,4.448275) circle [radius=0.1];
\draw [ dashed] (2.109093,4.448275)-- (4,4);

\draw [fill] (0,6) circle [radius=0.1];
\node[opacity=1] at (0,6.5) {$(0,6)$};

\draw [fill] (2.345208,5.5) circle [radius=0.1];
\draw [ black] (2.345208,5.5) -- (0,6);

\draw [fill] (-2.345208,5.5) circle [radius=0.1];
\draw [ black] (-2.345208,5.5) -- (0,6);

\draw [fill] (0,0) circle [radius=0.1];
\draw [ black] (0,0) -- (0,6);

\draw [fill] (-4,1) circle [radius=0.1];
\node[opacity=1] at (-5,1.5) {$(-4,1)$};

\draw [fill] (-1.391769,1.93702) circle [radius=0.1];
\draw [ dashed] (-1.391769,1.93702) -- (-4,1);

\end{tikzpicture}
\end{center}
\caption{Position of the point and its orthogonal projection on the parabola $f(x) = x^2$. The distance between point and his orthogonal projection, when it is situated above the curve is marked by a solid line. On the other hand, if the relationship is reversed, we mark the projection by a dashed line.}
\label{fig:AcaGMMFix2}
\end{figure}
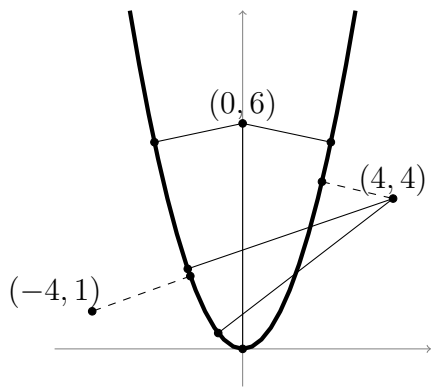

Now we have the formula for the Jacobian of AcaGMM transformation, but it depends on the relation between a point and its orthogonal projection. More precisely, we have to verify which formula should be used (or equivalently on which side of parabola a point is found), see Fig. \ref{fig:AcaGMMFix2}. 

We can easily verify where the point $(p_1,p_2)$ is in relation to the orthogonal projection $(x(t_0),y(t_0))$ by checking the orientation of a basis containing the  normal vector $(p_1,p_2)-(x(t_0),y(t_0))$ and the tangent vector $(x'(t_0),y'(t_0))$ at a point $(x(t_0),y(t_0))$. 
Consequently, we have to verify the sign of the determinant
$$
\det \left(
\begin{bmatrix}
p_1 - x(t_0) & x'(t_0) \\
p_2 - y(t_0) & x'(t_0)
\end{bmatrix}
\right).
$$

\section{Acknowledgements}

The study is cofounded by the European Union from resources of the European Social
Fund. Project PO KL ``Information technologies: Research and their interdisciplinary
applications'', Agreement UDA-POKL.04.01.01-00-051/10-00.



%
%

%
\end{document}